\def\eqref#1{equation~\ref{#1}}
\def\1{\bm{1}}
\DeclareMathAlphabet{\mathsfit}{\encodingdefault}{\sfdefault}{m}{sl}
\SetMathAlphabet{\mathsfit}{bold}{\encodingdefault}{\sfdefault}{bx}{n}
\theoremstyle{plain}
\newtheorem{theorem}{Theorem}[section]
\newtheorem{lemma}[theorem]{Lemma}
\newtheorem{corollary}[theorem]{Corollary}
\theoremstyle{definition}
\newtheorem{assumption}[theorem]{Assumption}
\theoremstyle{remark}
\title{Fake It Till Make It: Federated Learning with \\ Consensus-Oriented Generation}
\author{Rui Ye\textsuperscript{1}, Yaxin Du\textsuperscript{1}, Zhenyang Ni\textsuperscript{1}, Siheng Chen\textsuperscript{1,2\Letter}, Yanfeng Wang\textsuperscript{2,1} \\
\textsuperscript{1}Shanghai Jiao Tong University, \textsuperscript{2}Shanghai AI Laboratory \\
\texttt{\{yr991129,dorothydu,0107nzy,sihengc,wangyanfeng\}@sjtu.edu.cn} \vspace{-5mm}
}
\begin{document}

\maketitle

\begin{abstract}

In federated learning (FL), data heterogeneity is one key bottleneck that causes model divergence and limits performance. Addressing this, existing methods often regard data heterogeneity as an inherent property and propose to mitigate its adverse effects by correcting models. In this paper, we seek to break this inherent property by generating data to complement the original dataset to fundamentally mitigate heterogeneity level. 
As a novel attempt from the perspective of data, we propose federated learning with consensus-oriented generation (\texttt{FedCOG}). \texttt{FedCOG} consists of two key components at the client side: complementary data generation, which generates data extracted from the shared global model to complement the original dataset, and knowledge-distillation-based model training, which distills knowledge from global model to local model based on the generated data to mitigate over-fitting the original heterogeneous dataset.
\texttt{FedCOG} has two critical advantages: 1) it can be a plug-and-play module to further improve the performance of most existing FL methods, and 2) it is naturally compatible with standard FL protocols such as Secure Aggregation since it makes no modification in communication process.
Extensive experiments on classical and real-world FL datasets show that \texttt{FedCOG} consistently outperforms state-of-the-art methods.

\end{abstract}

\section{Introduction}

Federated learning (FL) is an emerging privacy-preserving training paradigm that enables multiple clients to train a global model collaboratively without directly accessing their raw data~\citep{fedavg}. With the increasing privacy concerns and legislation, FL has attracted much attention from industry and academia~\citep{advances,yang_survey}. FL can be widely used in a diverse of real-world applications, including keyboard prediction~\citep{fed_keyboard,fedkey_2}, healthcare~\citep{fedcovid,xu2021federated}, and finance~\citep{fedfinance}.

As one of the most common and critical issues in FL, data heterogeneity fundamentally limits FL's practical performance~\citep{fedavg} as clients' datasets could be distinctly different from each other~\citep{li_survey}. To tackle this issue, enormous previous works focus on model-level correction at either the client or server side. At the client side, many methods propose to enhance the consistency among local models through regularization in model space~\citep{fedprox,feddyn} or feature space~\citep{moon,feddecorr}, or local gradient correction~\citep{scaffold}. At the server side, some methods introduce update momentum~\citep{fedavgm,fedopt}, apply knowledge distillation based on public dataset~\citep{feddf,fedmd}, or adjust aggregation manner~\citep{fednova,lifair}.  However, these previous works regard the data heterogeneity as an \emph{inherent property} and attempt to mitigate its negative effects via \emph{model correction}, leaving the training adversely affected throughout the FL process as the inherent heterogeneity persists to cause client drift~\citep{scaffold}. 

In this paper, we seek an orthogonal approach to address the data heterogeneity issue: correcting the data itself to mitigate the heterogeneity level. To correct the data, our core idea is to generate data from the shared global model as consensus to complement the original data, which contributes to mitigate the effects of data heterogeneity by making all local datasets more homogeneous (e.g., local categorical distributions are more balanced).

Following this spirit of data correction, we propose a new FL algorithm, \textbf{Fed}erated Learning with \textbf{C}onsensus-\textbf{O}riented \textbf{G}eneration, denoted as \textbf{FedCOG}. 
FedCOG includes two novel components: complementary data generation and knowledge-distillation-based model training.
1) During complementary dataset generation, each client generates data that is accurately predicted by global model but incorrectly predicted by local model. In this case, the generated data not only contains consensual knowledge in the global model but also serves as an informative dataset complement, mitigating the level of data heterogeneity.
2) During local model training, beside minimizing the conventional task-driven loss on the original dataset, each client distills the knowledge from global model to current local model based on the generated dataset. Such knowledge distillation contributes to enhance the consensus among local models, which further alleviates the impact of data heterogeneity. Overall, FedCOG improves the performance by mitigating both data heterogeneity level and its effects.

FedCOG has two critical properties: 1) plug-and-play property and 2) compatibility with conventional real-world FL protocol. First, FedCOG focuses on data correction, which is orthogonal to most existing FL methods (e.g., those that focus on model correction~\citep{fedprox,scaffold}) and thus can be easily combined with them to further enhance the performance. Second, FedCOG is compatible with practical standard FL protocols such as Secure Aggregation~\citep{secure_aggregation} since it does not modify the communication process on standard FL, making it convenient to deploy in real-world application.

With extensive experiments across different representative datasets including real-world multi-label FL dataset FLAIR~\citep{flair}, heterogeneity types and heterogeneity levels, we show that our proposed FedCOG consistently outperforms nine representative baselines and is of plug-and-play property. Besides, we conduct further empirical analysis to provide more insights in FedCOG from the perspective of alleviated model dissimilarity and enhanced local model performance.

The main contribution is as follows:
\begin{itemize}
    \item We propose a novel attempt to tackle data heterogeneity in FL from the perspective of data correction, which seek to mitigate its adverse effects from the dataset itself.

    \item We propose a novel algorithm, FedCOG, which mitigates the issue of data heterogeneity by generating data to complement original data and distilling knowledge from global model to local model through the generated data.

    \item With extensive experiments, we show FedCOG consistently achieves the best and helps relieve model dissimilarity and forgetting.
\end{itemize}

\section{Related Work}

\textbf{Data Heterogeneity in Federated Learning.} Data heterogeneity is one of the most fundamental and essential challenges in FL~\citep{fedavg,advances}, where data distributions of clients could distinctly vary from each other, and is shown to affect the performance of FL empirically~\citep{fed_empirical,fedprox} and theoretically~\citep{li2019convergence,yu2019convergence}. Addressing this, many methods are proposed from the aspects of local model correction and global model adjustment. \textbf{1) Local model correction} aims for more similar local models at the client side. FedProx~\citep{fedprox} and FedDyn~\citep{feddyn} propose regularizing the distance between local and global model. SCAFFOLD~\citep{scaffold} introduces control variate to correct local gradients, MOON~\citep{moon}, and FedDecorr~\citep{feddecorr} regularize the local models from the feature space via feature alignment or feature correlation regularization. \textbf{2) Global model adjustment} aims for better-performed global model at the server side. FedAvgM~\citep{fedavgm} and FedOPT~\citep{fedopt} introduce momentum to stabilize global model updating. FedNova~\citep{fednova} and FedExP~\citep{fedexp} modify the aggregation rules by adjusting aggregation weights or global learning rate. Others may apply knowledge distillation based on a public dataset~\citep{feddf} or design client sampling strategies~\citep{power-of-choice}.

Different from these two aspects, our method addresses this issue in a novel way by directly reducing heterogeneity level from the aspect of data. We propose to generate additional data samples to complement the original heterogeneous dataset for each client, which is achieved by inversely optimizing inputs given the global and local model.

\textbf{Data Generation and Augmentation in FL.} (1) Recently, some methods~\citep{dense,fedftg,fedgen,fedzdas} apply data generation at the server side to tune the global model for better performance, which generate data based on all local models. However, this could be unacceptable in practice since Secure Aggregation technique~\citep{secure_aggregation,so2022lightsecagg} is often required so that the server can only receive the aggregated global model, making these methods less practical and privacy-preserving.  In contrast, our proposed FedCOG is compatible with Secure Aggregation and fundamentally mitigates the data heterogeneity by generating complementary data.
(2) Some methods~\citep{vhl,fraug,distrans} apply data augmentation, though, they all introduce additional communication objects other than model parameters~\citep{fedavg}, which increases communication cost and/or risks of privacy leakage.
Others~\citep{fedreg,fedcontinual} may generate data to address the problem of forgetting.
Unlike these, FedCOG makes no compromise on privacy and communication cost, and is a novel method that adopts task-specific data generation to tackle data heterogeneity.

\textbf{Data Generation from Model.} There are a line of works that focus on data generation from a single model~\citep{model_inversion_attacks,mahendran2015understanding,mahendran2016visualizing} to understand the representations of neural networks. Some works extend data generation to achieve data-free knowledge distillation~\citep{fang2019data,zskd,deepdream,deepinversion}. For example, DeepDream~\citep{deepdream} generates data by penalizing classification loss, total variance and $\ell_2$ norm while DeepInversion~\citep{deepinversion} additionally adds a feature regularization and divergence term. These works are orthogonal to ours as they do not consider FL scenarios and our method can be further enhanced by combining with more advanced generation techniques.

\section{Problem Formulation}

The objective of federated learning (FL) is to enable multiple clients collaboratively train a global model under the coordination of a server without directly accessing raw data~\citep{wangsurvey}. Specifically, suppose that there are $K$ clients, where each client $k$ holds a private local original dataset $\mathcal{D}_k$, the global objective of FL is:
$
    \mathop{\min}_{\bm{\theta}} \sum_{k=1}^K p_k \mathcal{L}_c(\bm{\theta};\mathcal{D}_k),
$
where $p_k = \frac{|\mathcal{D}_k|}{\sum_{i} |\mathcal{D}_i|}$ is the relative dataset size, $\mathcal{L}_c(\cdot)$ is the task-driven loss function. Conventionally, this optimization problem is solved by the two iterative steps, including 1) client step: client downloads global model $\bm{\theta}$ from the server and conducts training on local dataset $\mathcal{D}_k$ to obtain local model $\bm{\theta}_k$, 2) server step: server receives local models $\{\bm{\theta}_k\}$ from clients and aggregates them to obtain global model $\bm{\theta}:=\sum_k p_k \bm{\theta}_k$.

Since clients' datasets $\{\mathcal{D}_k\}$ can be heterogeneous, the trained local models at client step can be significantly different and eventually results in slow convergence and limited performance of global model. Some methods are proposed to mitigate the difference between local models $\{\bm{\theta}_k\}$ at client step~\citep{fedprox,scaffold} while some to improve the global model $\bm{\theta}$ at the server step~\citep{fedopt,fedexp}.

However, they do not consider modification on the original datasets $\{ \mathcal{D}_k \}$, which is the root cause of heterogeneity issue. In this paper, we propose a novel attempt for mitigating the effects of data heterogeneity by generating data to complement the original dataset for each client.

\section{Methodology}

We propose federated learning with consensus-oriented generation (FedCOG), which mitigates data heterogeneity by generating complementary data and refining training at the client side. FedCOG follows the standard FL protocol that consists of client and server steps~\citep{fedavg}. The novel designs of FedCOG lie in the \textbf{client step}, 
which is composed of two key modules: generating data to complement the original dataset and conducting knowledge distillation on the local model, which are elaborated in the following; also see details in Algorithm~\ref{alg:FedCOG} and illustration in Figure~\ref{fig:FedCOG}.

\begin{figure}[t]
    \centering
    \vspace{-10pt}
    \includegraphics[width=1.0\linewidth]{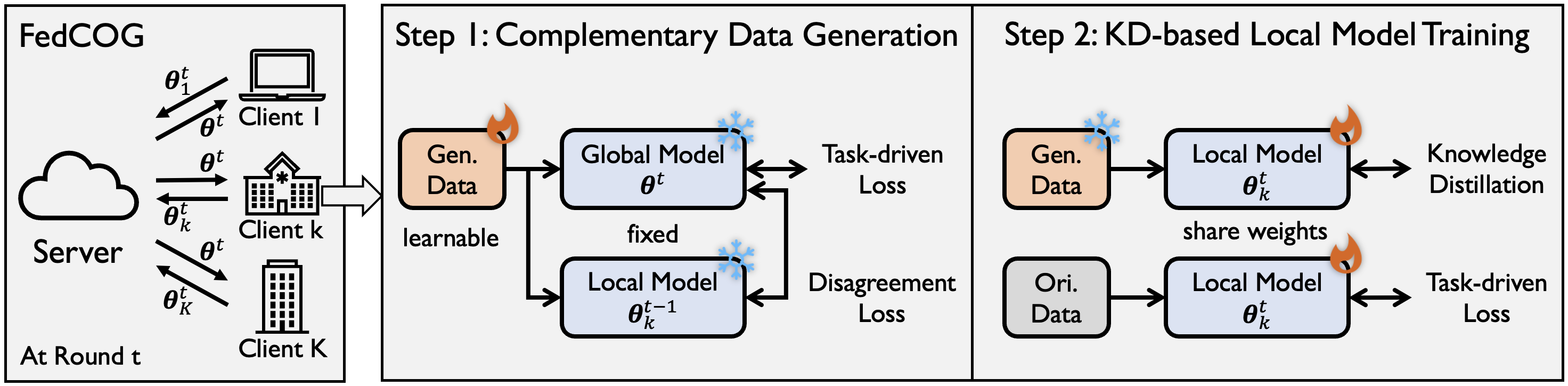}
    \vspace{-10pt}
    \caption{Overview of FedCOG, which consists of two steps at the client side. 1) Complementary data generation by optimizing the inputs, supervised by task-driven and disagreement loss. 2) Knowledge-distillation-based local model training by distilling knowledge from global model to local model through generated data and minimizing task-driven loss through original data.}
    \label{fig:FedCOG}
\end{figure}

\subsection{Complementary Dataset Generation}

As a novel attempt to mitigate the adverse effects of data heterogeneity from the perspective of data, the key step in FedCOG is generating data to complement the original dataset of each client, reducing the heterogeneity levels. We propose to generate task-specific and client-specific data by learning to produce samples that can be accurately predicted by the current global model and falsely predicted by the previous local model. Such generated data can contain consensual knowledge for the targeted task and complement the client dataset to increase data diversity.

To obtain the desired samples, we set the inputs of the model as learnable parameters and fix the model parameters of global model and local model, where the learnable inputs are optimized by the guidance of task-driven loss and disagreement loss. Specifically, let $\bm{\theta}^t$ be the global model at round $t$ and $\bm{\theta}^{t-1,r}_k$ be client $k$'s local model at previous round $t-1$ after $r$ SGD iterations of local model training. We use $\bm{\theta}^{*|t}$ to represent that the model parameters are fixed and $\bm{\theta}^{t}$ to represent that the model parameters are learnable. Denote $\mathcal{D}_k = \{ (\hat{\bm{x}}_i,y_i) | i \in N \}$ as the generated dataset of client $k$ where $N$ is the number of samples to generate, $\hat{\bm{x}}_i \in \mathbb{R}^{H \times W \times C}$ ($H,W,C$ denotes height, width and number of channels) is a learnable tensor and $y_i$ is an arbitrary pre-defined target label.
Each client $k$ generates data by solving the following optimization problem:
\begin{equation}
    \widehat{\mathcal{D}}_k^t := \mathop{\arg \min}_{\mathcal{D}_k} \mathcal{L}_{gen}(\bm{\theta}^{*|t},\bm{\theta}^{*|t-1,\tau}_k;\mathcal{D}_k) = \frac{1}{|\mathcal{D}_k|}\sum_{(\hat{\bm{x}},y)\in \mathcal{D}_k} \mathcal{L}_c (\bm{\theta}^{*|t};\hat{\bm{x}},y) + \lambda_{dis} \mathcal{L}_{dis} (\bm{\theta}^{*|t},\bm{\theta}^{*|t-1,\tau}_k;\hat{\bm{x}}),
\end{equation}
where $\mathcal{L}_{gen}(\cdot)$ is the overall generating objective. $\mathcal{L}_{c} (\cdot)$ is the task-driven loss, $\lambda_{dis}$ is a hyper-parameter, and $\mathcal{L}_{dis} (\cdot)$ is the defined disagreement loss. 
Taking single-label classification task as a concrete example, we define the task-driven loss as a cross-entropy loss and the disagreement loss based on Jensen-Shannon divergence~\citep{shannon,deepinversion}, which is formulated as:
\begin{equation}
    \mathcal{L}_{dis} (\bm{\theta}^{*|t},\bm{\theta}^{*|t-1,\tau}_k;\hat{\bm{x}}) := 1 - \frac{1}{2} \Big( KL\big(\bm{p}(\bm{\theta}^{*|t};\hat{\bm{x}}),\overline{\bm{p}}\big) + KL\big(\bm{p}(\bm{\theta}^{*|t-1,\tau}_k;\hat{\bm{x}}),\overline{\bm{p}}\big) \Big),
\end{equation}
where $\bm{p}(\bm{\theta};\hat{\bm{x}})$ denotes the output of model $\bm{\theta}$, $\overline{\bm{p}}=\frac{1}{2} \big(\bm{p}(\bm{\theta}^{*|t};\hat{\bm{x}}) + \bm{p}(\bm{\theta}^{*|t-1,\tau}_k;\hat{\bm{x}})\big)$ is the averaged output, and $KL(\cdot)$ is the Kullback–Leibler divergence~\citep{kl}. 

The proposed two terms contribute to generating consensual and reasonable and data to diversify and complement the local client dataset. 1) The first task-driven loss encourages generating data that is accurately recognized by the global model. Since the global model is the knowledge aggregation of multiple local models, such generated data is task-specific and can be seen as consensual knowledge of all clients. 2) The second disagreement term encourages the generated data to cause global-local model disagreement, serving as a better complement for the original client dataset as the generated data is generally 'unfamiliar' for the client's local model. The generated dataset $\widehat{\mathcal{D}}_k$ is subsequently utilized for the process of local model training; see Section~\ref{sec:local_train}.

\textbf{Details of designing target labels for generated data.} One basic generating strategy is uniformly generating data for all possible target labels. Taking CIFAR-10 as a concrete task example, the target label list can be $[\underbrace{0,...,0}_{n},\underbrace{1,...,1}_{n},...,\underbrace{9,...,9}_{n}]$, resulting in $10\times n$ samples to generate. Though, there can also be more specific designs for particular tasks. See Section~\ref{sec:details_gen}.

\begin{algorithm}[t]
	\caption{FedCOG: federated learning with consensus-oriented generation.} 
	\begin{algorithmic}[1]
	    \State \textbf{Initialization:} number of clients $K$, number of rounds $T$, number of iterations $\tau$.
	    \For {$t=0,1,\ldots, T-1$}     \Comment{FL rounds in sequence} 
            \State{Send global model $\bm{\theta}^t$ to each client}    \Comment{Model communication}
	        \For {$k=0,1,\ldots, K-1$ in parallel} \Comment{Clients in parallel} 
                \State $\widehat{\mathcal{D}}_k \leftarrow \mathop{\arg \min}_{\mathcal{D}} \mathcal{L}_{gen}(\bm{\theta}^{*|t},\bm{\theta}^{*|t-1,\tau}_k;\mathcal{D})$ \Comment{\textcolor{blue}{Complementary data generation}} 
	            \State $\bm{\theta}^{t,0}_k := \bm{\theta}^t$ 
	            \Comment{Model synchronization} 
	            \For {$r=0,1,\ldots, \tau-1$}  \Comment{Iterations in sequence}
                    \State $\bm{\theta}_k^{t,r+1} := \bm{\theta}_k^{t,r} - \eta \nabla \big( \mathcal{L}_{c}(\bm{\theta}^{t,r}_k) + \lambda_{kd} \mathcal{L}_{kd}(\bm{\theta}^{*|t},\bm{\theta}^{t,r}_k)\big)$
                    \Comment{\textcolor{blue}{KD-based model training}}
    	        \EndFor
    	        \State Send local model $\bm{\theta}^{t,\tau}_k$ to server \Comment{Model communication}
	       \EndFor
	       \State $\bm{\theta}^{t+1}:= \sum_{k=1}^K p_k \bm{\theta}^{t,\tau}_k$
            \Comment{Model aggregation}
	    \EndFor
    \State \textbf{Return:} $\bm{\theta}^T$
	\end{algorithmic}
	\label{alg:FedCOG}
\end{algorithm}

\subsection{Local Model Training with Knowledge Distillation}
\label{sec:local_train}

After data generation, the pivotal next step involves incorporating the generated data into local model training.
A simple and straightforward solution is to merge the generated dataset with the original dataset, creating a new dataset for local model training. 
However, since there are inherent discrepancies between generated and real-world data, direct training on the generated data in a hard-labeled format may result in overly stringent supervision.
Thus, we advocate using the global model's soft labels for guidance, striking a balance between learning from valuable insights from the generated data while circumventing the potential biases from the generation process.

Specifically, client $k$ initially employs the global model to reinitialize the local model: $\bm{\theta}^{t,0}_k := \bm{\theta}^t$, and subsequently launches local model training on both of the original dataset $\mathcal{D}_k$ and the generated dataset $\widehat{\mathcal{D}}_k$, with the optimization objective defined as:
\begin{equation}
     \mathop{\min}_{\bm{\theta}} \frac{1}{|\mathcal{D}_k|} \sum_{(\bm{x}_i,y_i)\in \mathcal{D}_k} \mathcal{L}_{c} (\bm{\theta};\bm{x}_i,y_i) + \frac{\lambda_{kd}}{|\widehat{\mathcal{D}}_k|} \sum_{(\hat{\bm{x}}_i,y_i) \in \widehat{\mathcal{D}}_k} KL \big(\bm{p}(\bm{\theta}^{*|t};\hat{\bm{x}}), \bm{p}(\bm{\theta};\hat{\bm{x}})\big),
\end{equation}
where $\bm{p}(\bm{\theta}^{*|t};\hat{\bm{x}})$ is the output of global model, $\bm{p}(\bm{\theta};\hat{\bm{x}})$ is the output of currently optimizing local model, and $\lambda_{kd}$ is a hyper-parameter. Note that the outputs of the global model given generated data can be extracted from the previous generation step without the need for recomputation. Solving the optimization problem for $\tau$ steps of standard SGD in total, each client obtains the trained local model $\bm{\theta}_k^{t,\tau}$, which is then uploaded to the server for further model aggregation.

In this objective function, the optimization of the local model is balanced between task-driven loss (optimizing on the original real-world dataset) and knowledge distillation loss (optimizing on the generated dataset). This approach serves two purposes: first, to prevent over-fitting on the locally heterogeneous original dataset; and second, to preserve the knowledge of the global model and prevent excessive loss of consensus knowledge during local training.

\subsection{Discussions}

\textbf{Compatibility.} One advantage of FedCOG is that it is compatible with standard FL communication protocols including Secure Aggregation (SA)~\citep{secure_aggregation,so2022lightsecagg} since it does not modify the communication process compared with standard FL~\citep{fedavg}. However, a series of previous works are not compatible with SA since they assume that the server can directly access each individual local model~\citep{feddf,dense,fedftg}; while SA requires that the server can only obtain the summation of local models to enhance security and privacy~\citep{deepleakage}. Besides, FedCOG has the plug-and-play property that can be combined with many existing FL works~\citep{fedavg,fedavgm} and potentially FedCOG can be applied in future works to further improve their performance; see empirical evidence in Table~\ref{table:modularity}.

\textbf{Communication, privacy, and computation.}
As pointed in~\citep{advances}, privacy and communication efficiency are two first-order concerns in FL, our proposed FedCOG does not compromise on either of these two aspects since it does not introduce any additional communication objects other than model parameters as FedAvg~\citep{fedavg}. As there is no free lunch for privacy, utility, and efficiency in FL~\citep{no_free_lunch}, we search to enhance utility by slightly sacrificing computation efficiency, which is a relatively minor concern in FL. Though, the introduced computation overhead is acceptable as 1) the introduced generator is lightweight; see the comparison of training cost in Table~\ref{table:time}; and 2) generating data for only a few rounds is adequate to bring evident performance improvement; see experiments in Table~\ref{table:modularity}. We provide further discussions in~\ref{sec:resource_app}.

\section{Theoretical Convergence Analysis}

Here, we provide the theoretical convergence analysis for our method.
For simplicity, assume that all the clients share the same generated consensus data (this can be achieved by generating data based on only global model and sharing the same random seed). We denote the global objective function as
\begin{equation}
    \Phi(\bm{\theta};\mathcal{D})=\sum_{k=1}^K p_k \Phi_k(\bm{\theta};\mathcal{D}) = \sum_{k=1}^K p_k \left[ F_k(\bm{\theta}) + Q_k(\bm{\theta};\mathcal{D}) \right],
\end{equation}
where $F_k$ is task loss, $Q_k$ is KD loss on generated dataset $\mathcal{D}$, and $\sum_{k=1}^K p_k = 1$.
Then, we follow four conventional assumptions in FL theoretical literature~\cite{fednova}.

\begin{assumption}[Smoothness]
Each loss function $\Phi_k(\bm{\theta};\mathcal{D})$ is Lipschitz-smooth.
\end{assumption}

\begin{assumption}[Bounded Scalar]
$\Phi_k(\bm{\theta};\mathcal{D})$ is bounded below by $\Phi_{inf}$.
\end{assumption}

\begin{assumption}[Unbiased Gradient and Bounded Variance]
For each client, the stochastic gradient is unbiased: $\mathbb{E}_\xi[g_k(\bm{\theta}|\xi)] = \nabla \Phi_k(\bm{\theta};\mathcal{D})$, and has bounded variance: $\mathbb{E}_\xi[||g_k(\bm{\theta}|\xi)-\nabla \Phi_k(\bm{\theta};\mathcal{D})||^2] \leq \sigma^2$.
\end{assumption}

\begin{assumption}[Bounded Dissimilarity]
For any set of weights $\{ p_k \geq 0 \}_{k=1}^K$ subject to $\sum_{k=1}^Kp_k=1$, there exists constants $\beta^2 \geq 1$ and $\kappa^2 \geq 0$ such that $\sum_{k=1}^K p_k||\nabla \Phi_k(\bm{\theta};\mathcal{D})||^2 \leq \beta^2 ||\nabla \Phi(\bm{\theta};\mathcal{D})||^2 + \kappa^2$.
\end{assumption}

Additionally, we show a lemma that is critical for the derivation of our theoretical analysis.

\begin{lemma}[Non-Increasing Global Loss]
    $\Phi(\bm{\theta}^{(t+1)};\mathcal{D}^{(t+1)}) \leq \Phi(\bm{\theta}^{(t+1)};\mathcal{D}^{(t)}).$
\end{lemma}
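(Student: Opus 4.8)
The plan is to exploit the additive structure of the global objective together with the fact that the generated dataset is itself defined as the minimizer of a generation objective. First I would split
\begin{equation}
\Phi(\bm{\theta};\mathcal{D}) = \sum_{k=1}^K p_k F_k(\bm{\theta}) + \sum_{k=1}^K p_k Q_k(\bm{\theta};\mathcal{D}),
\end{equation}
and observe that only the second sum depends on the dataset. Since the model argument is held fixed at $\bm{\theta}^{(t+1)}$ on both sides of the claimed inequality, the task-loss term $\sum_k p_k F_k(\bm{\theta}^{(t+1)})$ is identical on both sides and cancels. It therefore suffices to establish $\sum_k p_k Q_k(\bm{\theta}^{(t+1)};\mathcal{D}^{(t+1)}) \le \sum_k p_k Q_k(\bm{\theta}^{(t+1)};\mathcal{D}^{(t)})$, i.e.\ to show that regenerating the dataset does not increase the dataset-dependent part of $\Phi$.

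Next I would invoke the definition of the generation step. In the simplified setting assumed for the analysis, all clients share a single consensus dataset produced from the current global model, and that dataset is obtained as $\mathcal{D}^{(t+1)} = \arg\min_{\mathcal{D}} \Phi(\bm{\theta}^{(t+1)};\mathcal{D})$ (equivalently, as the argmin of the dataset-dependent part, since the task term does not vary with $\mathcal{D}$). Because $\mathcal{D}^{(t)}$, the dataset carried over from the previous round, is one admissible point in the feasible set over which this minimization ranges, the attained minimum cannot exceed the value at $\mathcal{D}^{(t)}$. Chaining this with the cancellation from the first step yields $\Phi(\bm{\theta}^{(t+1)};\mathcal{D}^{(t+1)}) \le \Phi(\bm{\theta}^{(t+1)};\mathcal{D}^{(t)})$, which is exactly the lemma.

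The step I expect to be the main obstacle is justifying the argmin identification, because the dataset the algorithm actually produces minimizes $\mathcal{L}_{gen}$ (task-driven plus disagreement loss), which is not literally the term $Q_k$ appearing in $\Phi$. I would address this by making precise the idealization already implicit in the setup: treat generation as exactly solving its optimization problem, and align the generation objective with the dataset-dependent part of $\Phi$ so that minimizing one minimizes the other. A more robust alternative, should exact global optimality be undesirable, is to view generation at round $t+1$ as a descent process on $\Phi(\bm{\theta}^{(t+1)};\cdot)$ initialized at $\mathcal{D}^{(t)}$: since each descent step is non-increasing in the objective, the returned $\mathcal{D}^{(t+1)}$ again has objective value no larger than that at the initialization $\mathcal{D}^{(t)}$, giving the same conclusion without requiring a global minimum. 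Either way, the inequality becomes a one-line consequence once the optimality (or monotone-descent) property of the generation step is pinned down.
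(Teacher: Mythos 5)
Your opening move --- splitting $\Phi$ into the task part $\sum_k p_k F_k$ and the dataset-dependent part $\sum_k p_k Q_k$ and cancelling the former --- matches the paper. But the step you yourself flag as the main obstacle is where the argument genuinely breaks, and neither of your proposed repairs closes it. The generated dataset $\widehat{\mathcal{D}}^{(t+1)}$ is \emph{not} the argmin of $\Phi(\bm{\theta}^{(t+1)};\cdot)$, nor of its $Q$-part, over datasets: it minimizes $\mathcal{L}_{gen}$, which is the task-driven loss of the \emph{global} model plus a disagreement term that actively seeks inputs on which the global and local models \emph{disagree}. Pushing up disagreement tends to make a KL-type distillation loss larger, not smaller, so there is no alignment between ``minimizing $\mathcal{L}_{gen}$'' and ``minimizing $Q$'' to be had. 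Your descent-based fallback fails for the same structural reason: generation is initialized from noise (not from $\mathcal{D}^{(t)}$) and descends on $\mathcal{L}_{gen}$ (not on $Q(\bm{\theta}^{(t+1)};\cdot)$), so monotonicity of the generation iterates says nothing about $Q$ evaluated at the old versus new dataset.

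The paper's proof rests on a different and much more elementary observation about what $Q$ actually is. The KD loss at round $t$ is $KL\big(\bm{p}(\bm{\theta}^{*|t};\hat{\bm{x}}),\bm{p}(\bm{\theta};\hat{\bm{x}})\big)$, i.e.\ the teacher is the fixed global model of that round. Hence the round-$(t+1)$ objective, evaluated at the new global model $\bm{\theta}^{(t+1)}$, has teacher equal to the student, so $Q(\bm{\theta}^{(t+1)};\mathcal{D}^{(t+1)})=KL(\cdot\|\cdot)=0$ identically --- regardless of what data was generated. Meanwhile $Q(\bm{\theta}^{(t+1)};\mathcal{D}^{(t)})$ is a KL divergence against the \emph{old} teacher $\bm{\theta}^{(t)}$ and is therefore $\geq 0$. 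The inequality is then $0\le(\text{something nonnegative})$. In other words, the dataset superscript on $\Phi$ implicitly carries the teacher model along with it, and the lemma is really about the KD term resetting to zero at the start of each round, not about any optimality property of the generated data. To repair your write-up you would need to make that teacher-dependence explicit and replace the argmin argument with the $KL=0$ versus $KL\ge 0$ comparison.
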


Based on these assumptions and this lemma, we can finally prove the following theorem:
\begin{theorem}[Optimization bound of the global objective function]
\label{theorem_main}
    Under these assumptions, if we set $\eta L \leq min\{ \frac{1}{2 \tau}, \frac{1}{\sqrt{2 \tau (\tau-1)(2 \beta^2 +1)}} \}$, the optimization error will be bounded as follows:
\begin{align}
& \mathop{\min}_{t} \mathbb{E} ||\nabla \Phi (\bm{\theta}^{(t)};\mathcal{D}^{(t)})||^2 \notag\\
\leq & \frac{4[\Phi(\bm{\theta}^{(0)};\mathcal{D}^{(0)})-\Phi_{inf}]}{\tau \eta T}+4\eta L \sigma^2 \sum_{k=1}^K p_k^2 + 3 (\tau-1) \eta^2 \sigma^2L^2 + 6\tau (\tau-1) \eta^2 L^2 \kappa^2,
\end{align}
where $\eta$ is learning rate for local model training and $\tau$ is the number of local model updates.
\end{theorem}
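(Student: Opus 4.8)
The plan is to run a one-round descent argument under $L$-smoothness, but applied with the dataset held fixed at $\mathcal{D}^{(t)}$, and then to use the Non-Increasing Global Loss lemma to bridge consecutive rounds so that the per-round decrease telescopes despite the dataset changing from $\mathcal{D}^{(t)}$ to $\mathcal{D}^{(t+1)}$. Concretely, for each round I would first establish an inequality of the form
\[
\mathbb{E}[\Phi(\bm{\theta}^{(t+1)};\mathcal{D}^{(t)})] \le \Phi(\bm{\theta}^{(t)};\mathcal{D}^{(t)}) - c\,\tau\eta\,\|\nabla\Phi(\bm{\theta}^{(t)};\mathcal{D}^{(t)})\|^2 + (\text{error terms}),
\]
and then invoke the lemma, $\Phi(\bm{\theta}^{(t+1)};\mathcal{D}^{(t+1)}) \le \Phi(\bm{\theta}^{(t+1)};\mathcal{D}^{(t)})$, to replace the left-hand side by $\Phi(\bm{\theta}^{(t+1)};\mathcal{D}^{(t+1)})$. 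Summing over $t=0,\dots,T-1$ then telescopes the potential $\Phi(\bm{\theta}^{(t)};\mathcal{D}^{(t)})$, leaving $\Phi(\bm{\theta}^{(0)};\mathcal{D}^{(0)}) - \Phi_{inf}$ on top and a sum of squared gradient norms on the bottom, from which the $\min_t$ bound follows after dividing by $c\,\tau\eta T$.

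\textbf{Per-round descent.} To obtain the one-round inequality I would write the aggregated update as $\bm{\theta}^{(t+1)} - \bm{\theta}^{(t)} = -\eta \sum_{k} p_k \sum_{r=0}^{\tau-1} g_k(\bm{\theta}_k^{(t,r)})$, plug it into the smoothness inequality
\[
\Phi(\bm{\theta}^{(t+1)};\mathcal{D}^{(t)}) \le \Phi(\bm{\theta}^{(t)};\mathcal{D}^{(t)}) + \langle \nabla\Phi(\bm{\theta}^{(t)};\mathcal{D}^{(t)}),\, \bm{\theta}^{(t+1)}-\bm{\theta}^{(t)}\rangle + \tfrac{L}{2}\|\bm{\theta}^{(t+1)}-\bm{\theta}^{(t)}\|^2,
\]
and take expectation over the stochastic gradients. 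Unbiasedness (Assumption 3) turns the inner-product term into $-\eta \sum_r \langle \nabla\Phi(\bm{\theta}^{(t)}), \sum_k p_k \nabla\Phi_k(\bm{\theta}_k^{(t,r)})\rangle$; adding and subtracting $\nabla\Phi_k(\bm{\theta}^{(t)})$ produces the leading $-\tau\eta\|\nabla\Phi(\bm{\theta}^{(t)})\|^2$ together with a cross term controlled, via $2\langle a,b\rangle \le \|a\|^2+\|b\|^2$ and $L$-smoothness, by the client-drift quantity $\sum_k p_k \mathbb{E}\|\bm{\theta}_k^{(t,r)}-\bm{\theta}^{(t)}\|^2$. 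The quadratic term I would split into a variance piece, which under bounded variance contributes the $\eta L\sigma^2\sum_k p_k^2$ term, and a full-gradient piece bounded by the same drift quantity plus $\|\nabla\Phi\|^2$ through Assumption 4.

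\textbf{Bounding client drift (the crux).} The main obstacle is controlling $\mathbb{E}\|\bm{\theta}_k^{(t,r)}-\bm{\theta}^{(t)}\|^2$, the deviation accumulated over $r$ local SGD steps. I would derive a recursion: each local step adds a $-\eta g_k$ increment, so expanding the square and using unbiasedness, bounded variance, and $L$-smoothness gives a bound of the form $\mathbb{E}\|\bm{\theta}_k^{(t,r+1)}-\bm{\theta}^{(t)}\|^2 \le (1+\text{small})\,\mathbb{E}\|\bm{\theta}_k^{(t,r)}-\bm{\theta}^{(t)}\|^2 + \eta^2\sigma^2 + \eta^2\|\nabla\Phi_k(\bm{\theta}^{(t)})\|^2$. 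Unrolling over the $\tau$ steps, summing the resulting geometric series, and averaging against $p_k$ while applying Assumption 4 to turn $\sum_k p_k\|\nabla\Phi_k\|^2$ into $\beta^2\|\nabla\Phi\|^2+\kappa^2$, yields an aggregate drift bound proportional to $\tau(\tau-1)\eta^2(\sigma^2 + \beta^2\|\nabla\Phi\|^2 + \kappa^2)$. This is precisely where $\eta L \le 1/\sqrt{2\tau(\tau-1)(2\beta^2+1)}$ is needed: it keeps the geometric factor bounded and guarantees that the $\beta^2\|\nabla\Phi\|^2$ contribution can later be absorbed into the leading negative term rather than overwhelming it.

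\textbf{Assembly.} Substituting the drift bound back, the per-round inequality takes the form claimed above, where the condition $\eta L \le 1/(2\tau)$ ensures the coefficient $c$ of the gradient term is a positive constant; I expect $c=1/4$, matching the factor $4$ in the final bound. Applying the lemma, telescoping the sum over $t$, dividing by $c\,\tau\eta T$, and lower-bounding $\min_t \mathbb{E}\|\nabla\Phi\|^2$ by the average then delivers exactly the stated four terms: the initialization gap over $\tau\eta T$, the $\eta L\sigma^2\sum_k p_k^2$ variance term, the $(\tau-1)\eta^2\sigma^2 L^2$ local-variance term, and the $\tau(\tau-1)\eta^2 L^2\kappa^2$ dissimilarity term. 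The only non-routine bookkeeping is tracking the exact numerical constants so that the two step-size constraints line up with the coefficients $4$, $3$, and $6$ appearing in the theorem.
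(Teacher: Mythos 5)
Your proposal is correct and follows essentially the same route as the paper: an $L$-smoothness descent step at fixed dataset $\mathcal{D}^{(t)}$, a variance/client-drift decomposition controlled via the bounded-variance and bounded-dissimilarity assumptions (with the step-size conditions ensuring the drift-induced $\beta^2\|\nabla\Phi\|^2$ term is absorbed and the leading coefficient is $\tau\eta/4$), the Non-Increasing Global Loss lemma to pass from $\mathcal{D}^{(t)}$ to $\mathcal{D}^{(t+1)}$, and telescoping. The only difference is cosmetic: you bound the client drift by unrolling a per-step recursion into a geometric series, whereas the paper expands the accumulated update directly and solves a self-referential inequality for the summed drift; both yield the same $\tau(\tau-1)\eta^2$ factors.
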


\cref{theorem_main} establishes that when the number of communication rounds, $T$, tends towards infinity ($T \rightarrow \infty$), the expectation of the optimization error  remains bounded by a constant number for fixed $\eta$; see detailed proof in~\cref{app:proof_theorem}.
Further, based on~\cref{theorem_main}, we can deduce the following corollary when a suitable learning rate, denoted as $\eta$, is established:

\begin{corollary}[Convergence of the global objective function]
    By setting $\eta = \frac{1}{\sqrt{\tau T}}$, our method can converge to a stationary point given an infinite number of communication rounds $T$. Specifically, the bound could be reformulated as follows:
    \begin{align}
& \mathop{\min}_{t} \mathbb{E} ||\nabla \Phi (\bm{\theta}^{(t)};\mathcal{D}^{(t)})||^2 \notag \\ 
\leq & \frac{4[\Phi(\bm{\theta}^{(0)};\mathcal{D}^{(0)})-\Phi_{inf}]+4 L \sigma^2 \sum_{k=1}^K p_k^2}{\sqrt{\tau T}} + \frac{3 (\tau-1) \sigma^2L^2}{\tau T} + \frac{6(\tau-1) L^2 \kappa^2}{T}\\
= & \mathcal{O}(\frac{1}{\sqrt{\tau T}}) + \mathcal{O}(\frac{1}{T}) + \mathcal{O}(\frac{\tau}{T}).
\end{align}
\end{corollary}

This corollary indicates that as $T \rightarrow \infty$, the error bound approaches $0$, which indicates convergence.
Further, these results show that our method can converge with the same rate as many FL methods, such as FedAvg~\cite{li2019convergence,fednova,fedfm}. Please see detailed proof in~\cref{app:proof_theorem}.

\section{Experiments}

We conduct comprehensive comparisons with many FL baselines on multiple datasets, including real-world FL dataset FLAIR~\citep{flair}. See more details and results in Section~\ref{sec:app}.

\subsection{Implementation Details}

\textbf{Heterogeneous datasets.} Overall, we consider three classical datasets and one real-world FL dataset. 1) For the three classical datasets, including Fashion-MNIST~\citep{xiao2017fashion}, CIFAR-10/100~\citep{cifar10}, we consider two types of heterogeneity, namely NIID-1 and NIID-2. NIID-1 follows Dirichlet distribution $Dir_{\beta}$ (default $\beta=0.1$), which is a common setting in FL~\citep{bayesian,fedma}. NIID-2 makes each client hold data of only partial labels~\citep{fedavg,fedprox} ($2$ for Fashion-MNIST and CIFAR-10, $10$ for CIFAR-100). 2) FLAIR~\citep{flair} is a recently released real-world FL multi-label dataset, where each client is a user from Flickr and we adopt the task with $17$ labels. It is a challenging task since each user has different data distribution and few data samples.
See more in Section~\ref{app:exp_details}.

\textbf{Training setups.} For the three classical datasets, we consider $70$ communication rounds, $K=10$ clients, $\tau=400$ iterations of model training, and apply a simple CNN~\citep{moon}. For the more challenging FLAIR~\citep{flair} dataset, we consider $400$ communication rounds and apply ResNet18~\citep{resnet} model. Besides, we sample $K=200$ clients in total and $10$ clients in each FL round, and set $\tau=10$ because each client holds relatively few data samples. For all datasets, we use SGD as the optimizer with learning rate $0.01$. For FedCOG, we apply the most computation-efficient way by setting the inputs as learnable, supervised by $256$ pre-defined target labels. The default settings of $\lambda_{dis}$ and $\lambda_{kd}$ are $0.1$ and $0.01$, respectively.

\textbf{Baselines.} We compare with ten baselines. FedAvg~\citep{fedavg} is the basical baseline. FedProx~\citep{fedprox}, SCAFFOLD~\citep{scaffold}, MOON~\citep{moon}, FedSAM~\citep{fedsam}, FedDecorr~\citep{feddecorr} focus on local model correction, VHL~\citep{vhl} and FedReg~\citep{fedreg} focus on data augmentation, FedAvgM~\citep{fedavgm} and FedExP~\citep{fedexp} focus on global model adjustment. Among these, FedDecorr and FedExP are recently published baselines, and SCAFFOLD requires double communication cost.

\begin{table}[t]
\setlength\tabcolsep{4pt}
\small
\vspace{-10mm}
\caption{Accuracy (\%) comparisons across three datasets and two heterogeneous scenarios. The last column records the averaged accuracy across settings and FedCOG achieves the highest accuracy.}
\label{table:main}
\begin{center}
\begin{tabular}{c|cccccc|c}
\toprule
Dataset & \multicolumn{2}{c}{Fashion-MNIST} & \multicolumn{2}{c}{CIFAR-10}  & \multicolumn{2}{c|}{CIFAR-100} & \multirow{2}{*}{Avg.}\\
Heterogeneity & NIID-1  & NIID-2 & NIID-1 & NIID-2 & NIID-1 & NIID-2 & \\
\midrule
FedAvg       & 73.07 \tiny{$\pm 0.08$} &   64.11 \tiny{$\pm 0.78$} &   64.36 \tiny{$\pm 0.11$} &   50.55 \tiny{$\pm 0.45$} &   39.07 \tiny{$\pm 0.36$} &   30.93 \tiny{$\pm 0.23$} &  53.68\\
FedAvgM     & 76.81 \tiny{$\pm 0.48$} &   67.73 \tiny{$\pm 0.65$} &   64.36 \tiny{$\pm 0.33$} &   51.32 \tiny{$\pm 0.74$} &   39.21 \tiny{$\pm 0.19$} &   30.97 \tiny{$\pm 0.11$} &  55.07 \\
FedProx & 73.35 \tiny{$\pm 0.07$} &   64.44 \tiny{$\pm 1.36$} &   63.91 \tiny{$\pm 0.36$} &   53.51 \tiny{$\pm 0.32$} &   38.81 \tiny{$\pm 0.03$} &   30.76 \tiny{$\pm 0.21$} &  54.13 \\
SCAFFOLD   & 77.02 \tiny{$\pm 0.63$} &   58.47 \tiny{$\pm 3.05$} &   64.36 \tiny{$\pm 0.43$} &   48.35 \tiny{$\pm 1.86$} &   42.25 \tiny{$\pm 0.40$} &   34.27 \tiny{$\pm 0.22$} &  54.12 \\
MOON         & 73.21 \tiny{$\pm 0.28$} &   63.11 \tiny{$\pm 1.08$} &   63.65 \tiny{$\pm 0.04$} &   51.01 \tiny{$\pm 1.54$} &   39.34 \tiny{$\pm 0.18$} &   31.17 \tiny{$\pm 0.13$} &  53.58 \\
FedSAM       & 76.44 \tiny{$\pm 0.06$} &   68.14 \tiny{$\pm 0.60$} &   61.38 \tiny{$\pm 0.23$} &   50.76 \tiny{$\pm 0.36$} &   37.97 \tiny{$\pm 0.26$} &   30.75 \tiny{$\pm 0.11$} &  54.24 \\
VHL           & 73.08 \tiny{$\pm 0.89$} &   70.26 \tiny{$\pm 1.79$} &   60.81 \tiny{$\pm 1.33$} &   51.60 \tiny{$\pm 0.98$} &   37.55 \tiny{$\pm 1.06$} &   28.41 \tiny{$\pm 0.48$} &  53.62 \\
FedExP        & 65.03 \tiny{$\pm 1.14$} &   59.49 \tiny{$\pm 1.69$} &   48.96 \tiny{$\pm 2.11$} &   37.97 \tiny{$\pm 1.75$} &   35.14 \tiny{$\pm 1.25$} &   22.47 \tiny{$\pm 0.29$} &  44.84 \\
FedDecorr  & 72.77 \tiny{$\pm 0.70$} &   60.70 \tiny{$\pm 4.59$} &   63.40 \tiny{$\pm 0.29$} &   47.53 \tiny{$\pm 1.99$} &   37.68 \tiny{$\pm 0.40$} &   27.08 \tiny{$\pm 0.29$} &  51.53 \\
FedReg & 70.75 \tiny{$\pm 0.23$} & 69.89 \tiny{$\pm 0.74$} & 55.40 \tiny{$\pm 0.33$} & 46.64 \tiny{$\pm 0.42$} & 30.95 \tiny{$\pm 0.39$} & 23.70 \tiny{$\pm 0.31$} & 49.52 \\
FedGen & 72.64 \tiny{$\pm 1.27$} & 61.34 \tiny{$\pm 2.44$} & 62.45 \tiny{$\pm 0.40$} & 49.12 \tiny{$\pm 0.51$} & 37.99 \tiny{$\pm 0.23$} & 26.93 \tiny{$\pm 0.15$} &  51.75 \\
\rowcolor{gray!15}\textbf{FedCOG (Ours)}   &  \textbf{77.34} \tiny{$\pm 0.07$} &   \textbf{73.68} \tiny{$\pm 0.38$} &   \textbf{64.83} \tiny{$\pm 0.12$} &   \textbf{54.00} \tiny{$\pm 0.84$} &   \textbf{42.88} \tiny{$\pm 0.02$} &   \textbf{34.80} \tiny{$\pm 0.42$} &  \textbf{57.92}\\
\bottomrule
\end{tabular}
\vspace{-3mm}
\end{center}
\end{table}

\subsection{Comparisons with State-of-the-art Methods}

\textbf{Applicability on standard datasets.} Here, we conduct experiments under two types of heterogeneous scenarios on three standard datasets used in FL literature~\citep{fedprox,moon,feddecorr,fedexp}, including Fashion-MNIST~\citep{xiao2017fashion}, CIFAR-10/100~\citep{cifar10}. We run $50$ rounds of FedAvg before running $20$ rounds of FedCOG for higher computation efficiency. We report accuracy comparisons in Table~\ref{table:main}. From the table, we see that 1) FedCOG consistently performs the best across different datasets and heterogeneous scenarios, indicating the effectiveness of tackling data heterogeneity from the perspective of data. 2) For the relatively more challenging scenario, NIID-2, some methods (e.g., SCAFFOLD) even perform worse than FedAvg; while in contrast, our proposed FedCOG brings significantly larger performance gain. This indicates that complementary data generation is more effective in cases where local dataset misses data from several categories (i.e., more heterogeneous).

\begin{wraptable}{R}{0.46\textwidth}
  \centering
  \setlength\tabcolsep{4pt}
  \vspace{-20pt}
  \caption{Experiments on real-world FL multi-label dataset, FLAIR. C- and O- denote averaged metric across $17$ labels and samples, respectively. P: precision, R: recall.}
    \begin{tabular}{ccccc}
    \toprule
    Method   &C-P& C-R&  C-F1 & O-F1 \\
    \midrule
    FedAvg       & 30.76 & 14.23  & 18.30 & 44.53 \\
    FedAvgM      & 29.76 & 16.76 & 20.44 & 48.43 \\
    FedProx     & 32.27 & 14.95 & 18.66 & 45.71 \\
    SCAFFOLD   & 31.10 & 12.37 & 16.74 & 42.50 \\
    MOON            & 31.41 & 14.71 & 18.83 & 44.99 \\
    FedSAM        & 32.33 & 14.39 & 17.94 & 46.18 \\
    VHL             & N/A & N/A & N/A & N/A  \\
    FedDecorr  & 33.44 & 13.36 & 17.51 & 41.89 \\
    FedExP        & 29.14 & 15.70 & 19.64 & 46.57 \\ 
    FedReg             & N/A & N/A & N/A & N/A  \\
    \rowcolor{gray!15}\textbf{FedCOG}            & \textbf{37.87} & \textbf{18.78} & \textbf{20.89} & \textbf{51.35} \\
    \bottomrule
    \end{tabular}
    \vspace{-3mm}
    \label{table:flair}
\end{wraptable}
\textbf{Applicability on the real-world FL multi-label dataset, FLAIR}~\citep{flair}. We run additional $5$ rounds of FedCOG based on the global model trained by $400$ rounds of FedAvg, for the sake of computation efficiency. We compared with other baselines with $405$ rounds for fair comparison by evaluating per-class (denoted by C-) precision, recall, F1 score, and per-sample (denoted by O-) F1 score in Table~\ref{table:flair}. From the table, we see that 1) FedCOG consistently achieves the best performance under different evaluation metrics, indicating the effectiveness of data generation in FedCOG for multi-label task. 2) While some methods perform worse than the vanilla FL method FedAvg or even inapplicable to this task (e.g., VHL~\citep{vhl} and FedReg~\citep{fedreg}), our proposed FedCOG even brings larger performance improvement for such real-world multi-label task. Specifically, it outperforms FedAvg by $14\%$ and $15\%$ relatively with regard to C-F1 and O-F1.

\subsection{Efficient Plug-and-Play Property}

One valuable advantage of FedCOG is its plug-and-play nature since FedCOG focuses on data-level modification, which is orthogonal to most existing methods. In this section, we select five representative baseline methods, including FedAvg~\citep{fedavg}, FedAvgM~\citep{fedavgm}, FedProx~\citep{fedprox}, SCAFFOLD~\citep{scaffold} and MOON~\citep{moon}, to investigate the compatibility of FedCOG with existing methods.

Here, we first conduct $50$ rounds of training using the baseline method X. Subsequently, on one hand, we continue training method X for $1$ round to obtain Model A, and on the other hand, we conduct $1$ round of training of method X+FedCOG to obtain Model B. Finally, we compare the performances of Model A and B in Table~\ref{table:modularity}. From the table, we see that 1) FedCOG consistently brings performance gain when combined with these baseline methods, indicating its plug-and-play property. Specifically, it can bring $19\%$ O-F1 improvement to FedProx~\citep{fedprox} on FLAIR dataset. 2) there is already evident performance improvement through only $1$ round of FedCOG, indicating that FedCOG can take effects in a computation-efficient way.

\begin{table}[t]
\setlength\tabcolsep{5pt}
\vspace{-10mm}
\caption{Efficient plug-and-play property. FedCOG consistently brings performance improvement across different scenarios, with only $1$ round for CIFAR-10 and $3$ rounds for FLAIR.}
\label{table:modularity}
\begin{center}
\begin{tabular}{cc>{\columncolor{gray!15}}cc>{\columncolor{gray!15}}cc>{\columncolor{gray!15}}cc>{\columncolor{gray!15}}cc>{\columncolor{gray!15}}c}
\toprule
Method & \multicolumn{2}{c}{FedAvg} & \multicolumn{2}{c}{FedAvgM} & \multicolumn{2}{c}{FedProx} & \multicolumn{2}{c}{SCAFFOLD} & \multicolumn{2}{c}{MOON}\\
+ FedCOG? & $\times$ & $\bm{\surd}$ & $\times$ & $\bm{\surd}$ & $\times$ & $\bm{\surd}$ & $\times$ & $\bm{\surd}$ & $\times$ & $\bm{\surd}$ \\
\midrule
CIFAR-10-1 & 63.34 & \textbf{64.47} & 61.71 & \textbf{64.63} & 64.15 & \textbf{64.87} & 63.67 & \textbf{65.36} & 62.56 & \textbf{63.24} \\
CIFAR-10-2 & 49.30 & \textbf{49.50} & 44.73 & \textbf{45.14} & 50.35 & \textbf{50.89} & 46.47 & \textbf{51.26} & 49.64 & \textbf{50.72} \\
FLAIR O-F1 & 47.91 & \textbf{51.35} & 48.77 &  \textbf{51.60} & 45.71 &\textbf{54.57}& 42.50 & \textbf{45.22} & 44.99 & \textbf{52.78}\\
FLAIR C-F1 & 20.63 & \textbf{20.89} & 20.11 & \textbf{21.98} & 18.66 & \textbf{19.08} & 16.74 & \textbf{18.96} & 18.83 & \textbf{19.92}\\
\bottomrule
\end{tabular}
\vspace{-5mm}
\end{center}
\end{table}

\begin{figure}[t]
    \centering
    \subfigure[Model difference]{
		\includegraphics[width=0.31\columnwidth]{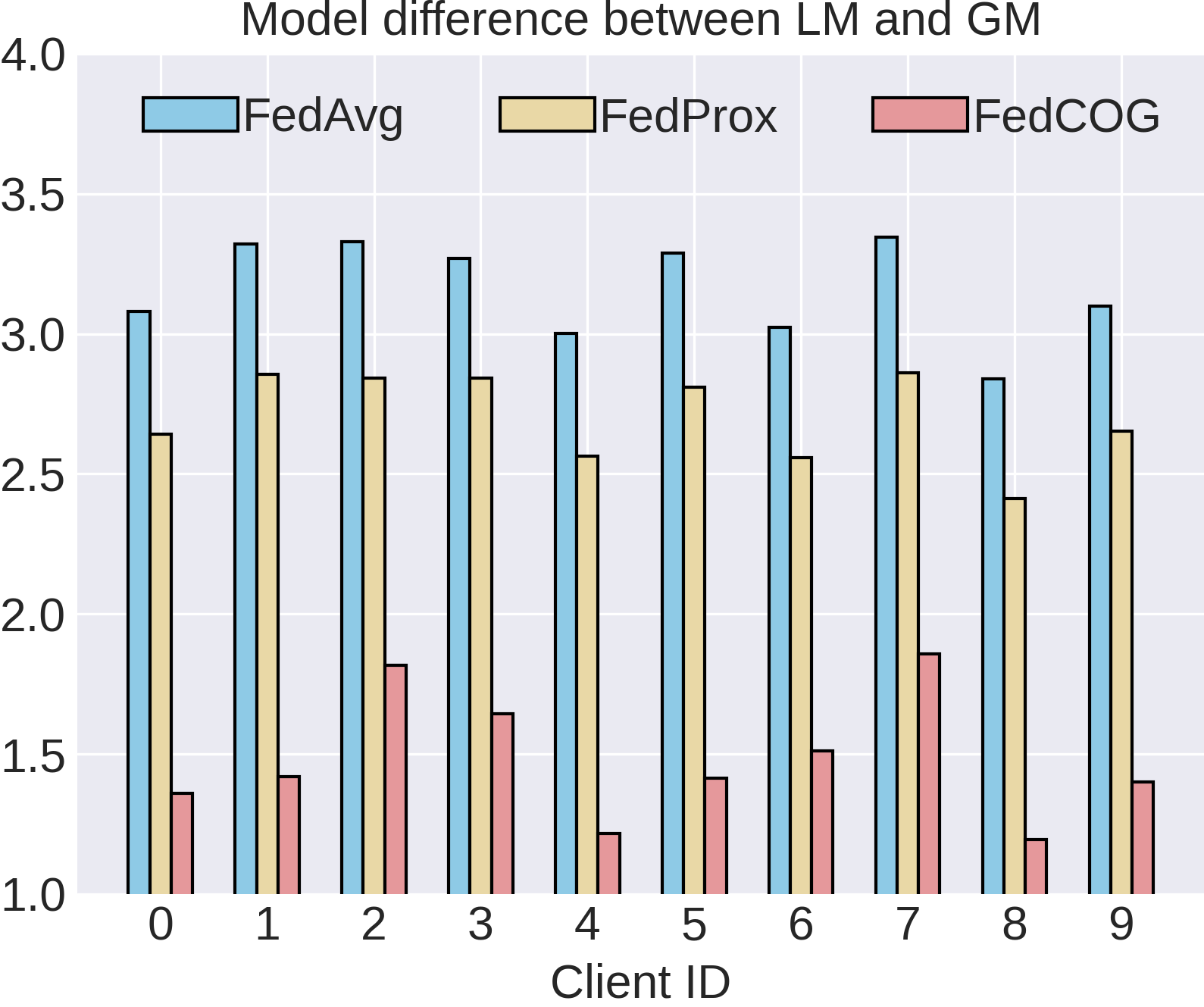}
		\label{fig:model_divergence}
	}
    \subfigure[Generalization analysis]{
        \includegraphics[width=0.31\columnwidth]{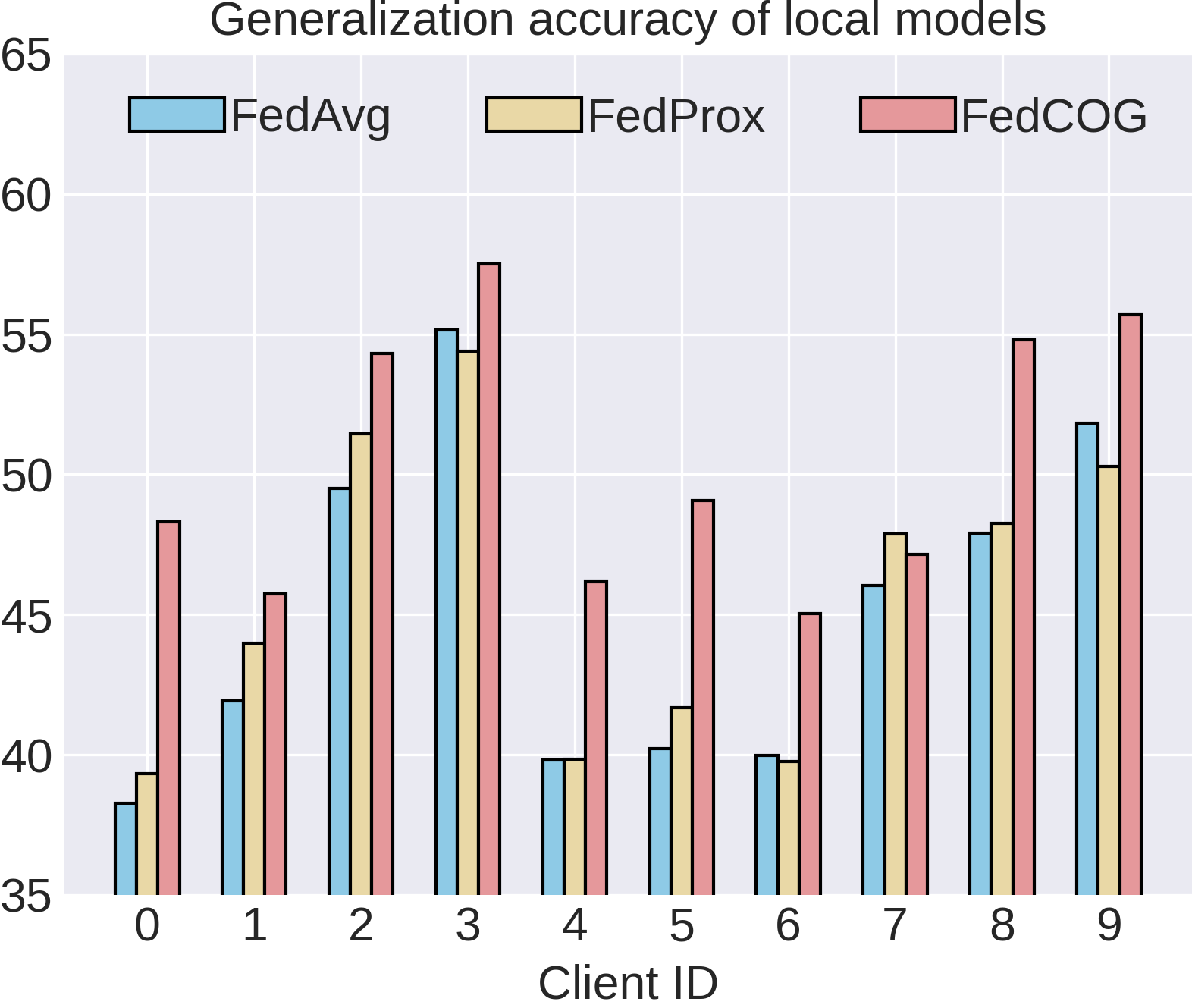}
        \label{fig:gen_lm}
    }
    \subfigure[Personalization analysis]{
        \includegraphics[width=0.31\columnwidth]{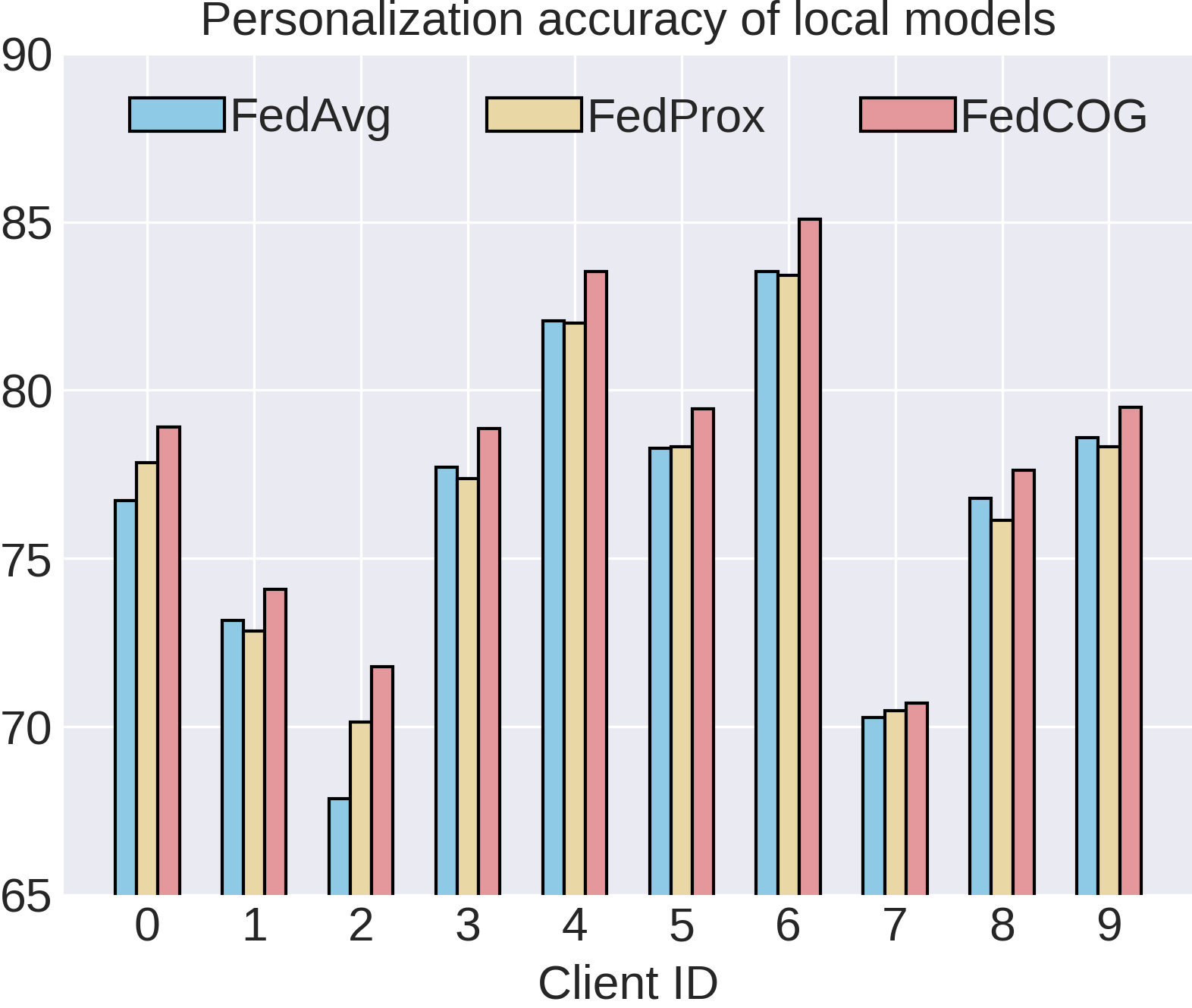}
        \label{fig:per_lm}
    }    
    \caption{Empirical analysis of FedAvg, FedProx, and FedCOG. a) FedCOG has the lowest model difference between local model (LM) and global model (GM). b) FedCOG consistently and significantly achieves the best preservation of the generalization ability of local models. c) Interestingly, FedCOG also consistently achieves best personalization performance.}
    \vspace{-5mm}
    \label{fig:empirical_analysis}
\end{figure}

\subsection{Empirical Analysis of FedCOG}

We dive into the training process by evaluating model difference, generalization, and personalization of local models to provide more deeper insights. Experiments are conducted on a common setting~\citep{feddecorr,fedexp}, NIID-1 on CIFAR-10. We start from the same global model initialization (trained by $50$ rounds of FedAvg~\citep{fedavg}), then launches $1$ round of FedAvg, FedProx~\citep{fedprox}, and FedCOG, respectively. See more in Section~\ref{app:analysis_fedcog}.

\textbf{Model difference of local models.} Here, we measure the $\ell_2$ difference between each local model and the aggregated global model, which is a reasonable indicator of the effects of data heterogeneity~\citep{fedprox}. From Figure~\ref{fig:model_divergence}, we see that FedCOG achieves the lowest model difference between local and global model, indicating that FedCOG can significantly reduce the effects of data heterogeneity on the consistency of local models.

\textbf{Generalization ability of local models.} Here, we evaluate each local model on the uniform test dataset (used in Table~\ref{table:main}) to examine the preservation of general knowledge during local training. From Figure~\ref{fig:gen_lm}, we see that FedCOG consistently and significantly achieves the highest accuracy of local models, indicating that FedCOG preserves the generalization ability best since it generates data from global model to complement the original dataset. Additionally, the achieved global model accuracies of FedAvg, FedProx and FedCOG are 63.34, 64.57 and 65.59, respectively, indicating that FedCOG achieves best generalization performance of both local models and global model.

\textbf{Personalization ability of local models.} 
Here we evaluate each local model on the personalized test dataset, which follows the data distribution of its training dataset. Interestingly, from Figure~\ref{fig:per_lm}, we find that FedCOG also consistently achieves the highest personalization performance, indicating that FedCOG can also enhances the fitting ability on local dataset. 
Such a performance gain of FedCOG could be attributed to two aspects: 
1) From a global view, FedCOG leverages data generation and knowledge distillation to achieve a well-performing aggregated global model. This global model is then used as the initialization for local model training, contributing to improved performance.
2) From a local perspective, the generated data serves two purposes: it enhances model learning by providing additional knowledge, and it mitigates the risk of over-fitting. 
See the further explanations and comparisons of FedCOG against several representative personalized FL methods in \ref{sec:per_app}.

\begin{wraptable}{R}{0.4\textwidth}
  \centering
  \setlength\tabcolsep{4.5pt}
  \vspace{-20pt}
  \caption{Local time of clients and test accuracy of global model. FedCOG introduces minor training cost while achieves the highest performance.}
    \begin{tabular}{ccc}
    \toprule
    Method   & Local Time (s) & Acc (\%) \\
    \midrule
    FedAvg       & \textbf{983}  $\pm  100$ & 63.34 \\
    FedProx      & $1051  \pm  87$     & 64.57 \\
    MOON         & $1276  \pm  147$    & 64.44 \\ 
    FedCOG                     & $1001  \pm  98$  & \textbf{65.59} \\
    \bottomrule
    \end{tabular}
    \vspace{-10pt}
    \label{table:time}
\end{wraptable}
\textbf{Computational cost.} We compare the total local time required by each client throughout $51$ rounds and the final accuracy in Table~\ref{table:time}. Note that we have include the generation time (which takes only 0.81 seconds for each round) required for FedCOG for fair comparison. 1) Results show that FedCOG introduces moderate training and generating time while achieving the highest accuracy. 2) This is acceptable as communication and privacy are more important in FL~\citep{advances} while FedCOG makes no compromise on these two aspects, and the utility is improved with evident gain. Specifically, compared with FedProx and MOON, FedCOG achieves much higher performance with the least computation time. 

\textbf{Applicability on different heterogeneity levels.} Here, we tune the argument $\beta \in \{0.1, 0.5, 1.0, 5.0\}$ for Dirichlet distribution in NIID-1 and compare with two representative baselines in Figure~\ref{fig:beta}. Note that a smaller $\beta$ denotes a more heterogeneous setting. From the figure, we see that 1) generally, FL methods have worse performance under the more heterogeneous scenarios, indicating that data heterogeneity is a critical issue that limits FL performance. 2) FedCOG consistently performs the best across different heterogeneity levels, indicating its effectiveness on tackling data heterogeneity.

\vspace{-5mm}
\begin{figure}[t]
    \centering
    \subfigure[Heterogeneity Level]{
		\includegraphics[width=0.31\columnwidth]{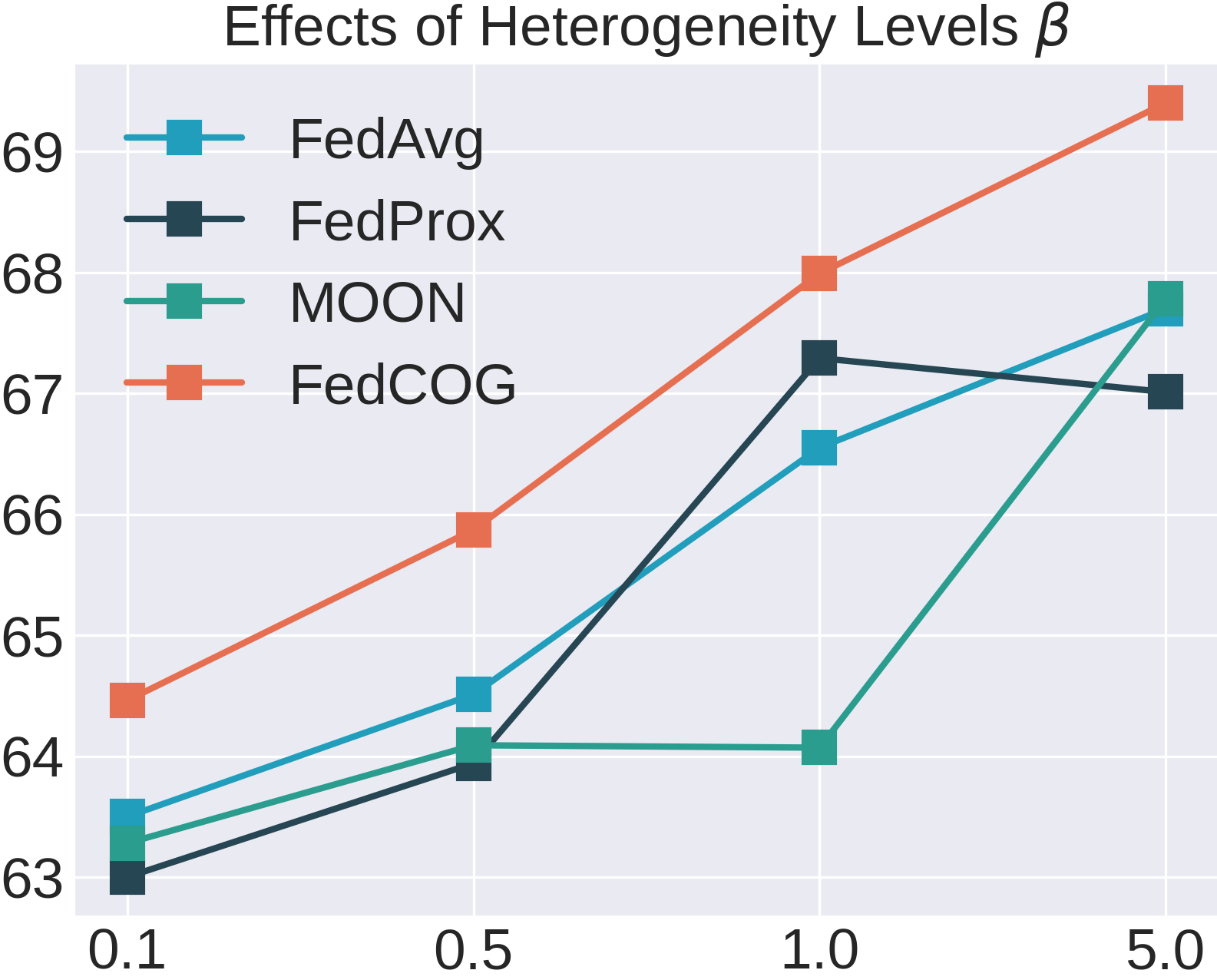}
		\label{fig:beta}
	}
    \subfigure[Disagreement $\lambda_{dis}$]{
        \includegraphics[width=0.31\columnwidth]{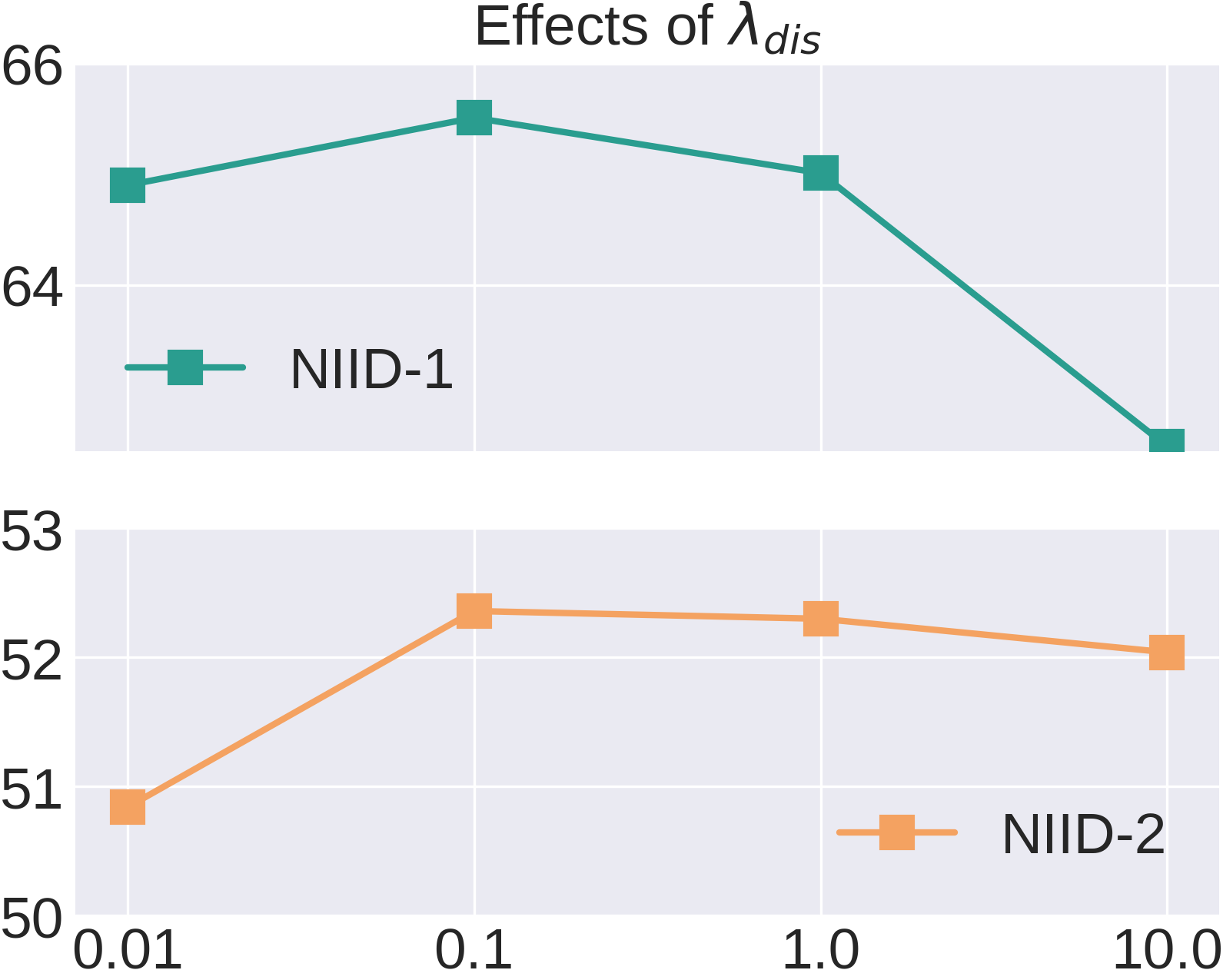}
        \label{fig:dis}
    }
    \subfigure[Knowledge Distillation $\lambda_{kd}$]{
        \includegraphics[width=0.31\columnwidth]{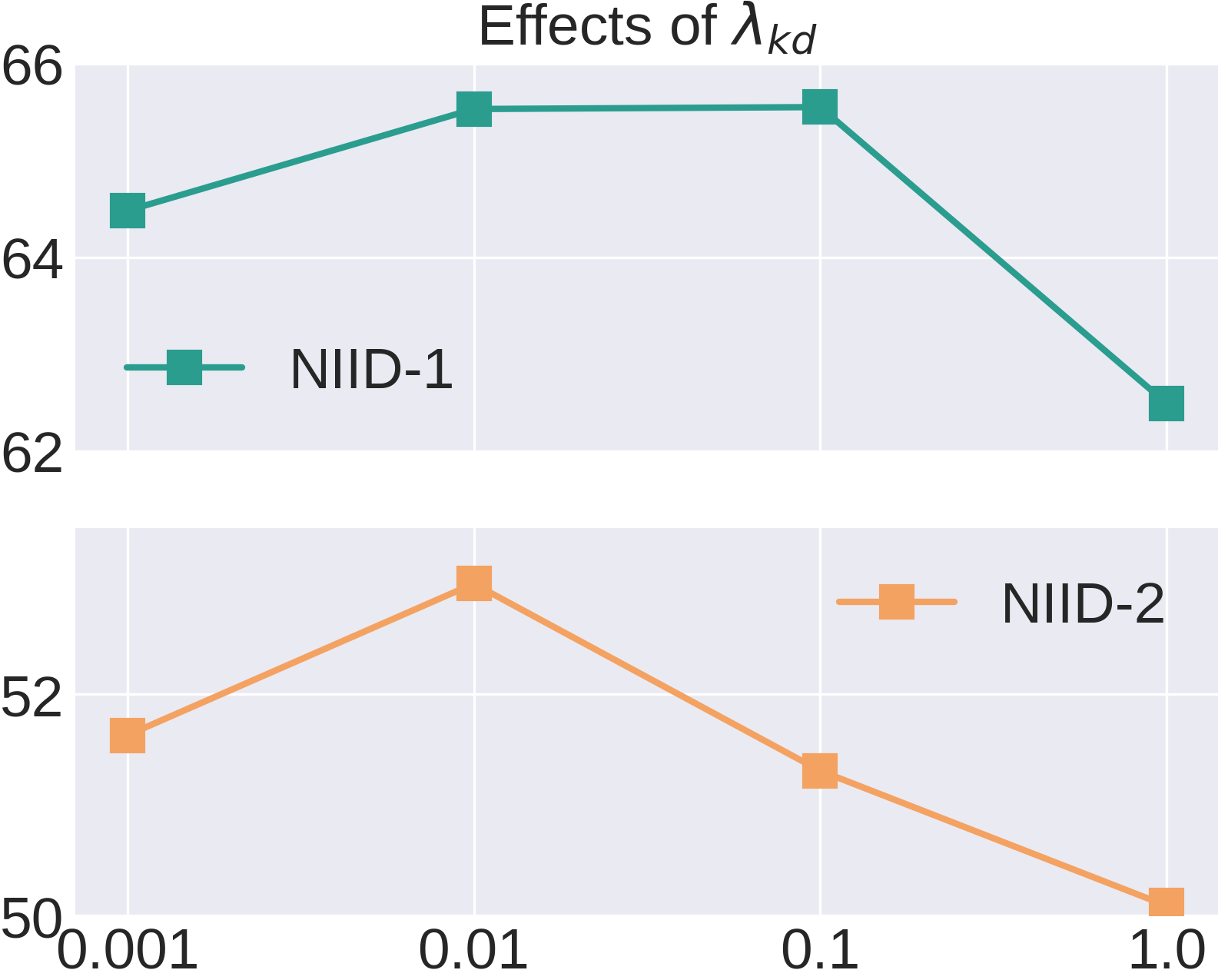}
        \label{fig:kd}
    }    
    \caption{Ablation study. (a) FedCOG is effective across different heterogeneity levels. (b) Effects of disagreement in data generation. (c) Effects of knowledge distillation in model training.}
    \label{fig:ablation_args}
\end{figure}

\subsection{Ablation Study}

To evaluate the impact of each module on the final performance of global model, we conduct experiments on CIFAR-10 for ablation study. See effects of FL arguments in Section~\ref{app:fl_arguments}.

\begin{wraptable}{R}{0.53\textwidth}
    \setlength\tabcolsep{2.5pt}
    \centering
    \vspace{-18pt}
    \caption{Ablation study of FedCOG. \ding{172}: Complementary data is from Gaussian noise. \ding{173}: Data is generated from global model. \ding{174}: Hard-label supervision on generated data. \ding{175}: Full version of FedCOG.}
    \label{table:ablation}
    \begin{tabular}{c|cccc}
        \toprule
        No. & Gen. Data Source & Super. & NIID-1 & NIID-2 \\
        \midrule 
        \ding{172} & Gaussian Noise  & Soft & 60.98 & 49.67 \\
        \ding{173} & Gen. from GM & Soft & 64.72 & 51.61 \\
        \ding{174} & Gen. from GM\&LM & Hard & 63.59 & 51.64 \\
        \ding{175} & Gen. from GM\&LM & Soft & \textbf{64.88} & \textbf{53.20} \\
        \bottomrule
    \end{tabular}
    \vspace{-10pt}
\end{wraptable}
\textbf{Complementary data from noise or generation.} To validate the effectiveness of task-specific data generation from global model and local model, we also run experiments on pure Gaussian noise. From \ding{172} \& \ding{175} in Table~\ref{table:ablation}, we see that data generation from global model and local model leads to significantly better performance than Gaussian noise, indicating the effectiveness of data generation through optimizing the inputs.

\textbf{Generating data from global model or global \& local models.} By comparing \ding{173} with \ding{175} in Table~\ref{table:ablation}, we can see that generating data from merely global model achieves moderate performance while adding the disagreement term between global and local model contributes to better performance, indicating the effectiveness of disagreement loss in FedCOG. More detailed analysis is in Figure~\ref{fig:dis}.

\textbf{Supervision in hard- or soft-label format.} By comparing \ding{174} and \ding{175} in Table~\ref{table:ablation}, we see that soft-label format supervision leads to better performance, indicating the effectiveness of knowledge-distillation-based model training in FedCOG. More detailed analysis is in Figure~\ref{fig:kd}.

\textbf{Effects of disagreement in generation and knowledge distillation in model training.} Under two heterogeneous settings on CIFAR-10, we tune $\lambda_{dis} \in \{0.01, 0.1, 1.0, 10.0\}$ and $\lambda_{kd} \in \{ 0.001, 0.01, 0.1, 1.0 \}$, showing results in Figure~\ref{fig:dis} and \ref{fig:kd}, respectively. From the figure, we see that generally $\lambda_{dis}=0.1$ and $\lambda_{kd}= 0.01$ can lead to better performance for both settings.

\section{Conclusion and Future Works}

In this paper, we seek to tackle the issue of data heterogeneity in FL from the novel perspective of modifying local dataset. To achieve this, we propose a novel FL algorithm, federated learning with consensus-oriented generation (FedCOG), to mitigate the heterogeneity level. FedCOG consists of two key components, complementary data generation to reduce heterogeneity level and knowledge-distillation-based model training to mitigate the effects of heterogeneity. FedCOG is plug-and-play in most existing FL methods, is compatible with standard FL protocol such as Secure Aggregation, and makes no compromise on communication cost and privacy while improves utility. Extensive experiments on classical and real-world FL datasets show FedCOG consistently outperforms state-of-the-art methods. We believe that FedCOG can inspire more future subsequent works along this line via adding regularization terms during generation or introducing advanced generative models.

\clearpage


\section*{Acknowledgement}

This research is supported by the National Key R\&D Program of China under Grant 2021ZD0112801, NSFC under Grant 62171276 and the Science and Technology Commission of Shanghai Municipal under Grant 21511100900 and 22DZ2229005.

\bibliography{iclr2024_conference}
\bibliographystyle{iclr2024_conference}

\newpage
\appendix
\section{Appendix}
\label{sec:app}

\subsection{Methodology}
\subsubsection{Details of designing target labels for generating data}
\label{sec:details_gen}
One basic generating strategy is uniformly generating data for all possible target labels. Taking CIFAR-10 as a concrete task example, the target label list can be $[\underbrace{0,...,0}_{n},\underbrace{1,...,1}_{n},...,\underbrace{9,...,9}_{n}]$, resulting in $10\times n$ samples to generate. 

Though, there can also be more specific designs for particular tasks such that the generated dataset can better complement the original heterogeneous dataset. Here, we take a 5-way classification task as an example, where the ultimate objective is training a global model that works well fairly across categories while different clients have different preference on categories. Suppose the categorical distribution of client $k$ is $\bm{d}_k=[500, 0, 400, 200, 400]$, where the $i$-th element $\bm{d}_{k,i}$ denotes the number of samples that belongs to category $i$. Comparing with generating equal number of samples across categories, a more reasonable and targeted strategy is to generate more samples for those categories with fewer samples, better complementing the original dataset and empowering the client to perform better on those minor categories.

Specifically, in order to make the complimented dataset more uniform, we generate data according to the following complementary categorical data distribution $\hat{\bm{d}}_k=max(\bm{d}_k)-\bm{d}_k$, that is, $\hat{\bm{d}}_k=500-\bm{d}_k=[0,500,100,300,100]$. Then, given the budget of generating samples $N$, we will allocate $N\times \frac{\hat{\bm{d}}_k,i}{|\hat{\bm{d}}_k|}$ label $i$'s in the target label list, e.g., generating $N\times \frac{500}{1000}$ samples for category $1$.

\subsection{Experimental Details}
\label{app:exp_details}

\subsubsection{Implementation Details}

\subsubsubsection{\textbf{Datasets and Heterogeneity.}} Overall, we consider three classical datasets and one real-world FL dataset. 
\begin{enumerate}
    \item Fashion-MNIST~\citep{xiao2017fashion} is a $10$-category classification dataset that consists of $10$ types of clothes;
    \item CIFAR-10 and CIFAR-100~\citep{cifar10} are two classification datasets that consists of $10$ and $100$ types of objects, respectively;
    \item FLAIR~\citep{flair} is a recently released real-world FL multi-label dataset, where each client is a user from Flickr with multiple images and corresponding tags. We adopt the task of predicting the existence of $17$ categories of objects in each image.
\end{enumerate}

For the three classical datasets, we consider two types of data heterogeneity, namely NIID-1 and NIID-2.
\begin{enumerate}
    \item NIID-1 follows Dirichlet distribution $Dir_{\beta}$ (default $\beta=0.1$), which is a common setting in FL~\citep{bayesian,fedma,fednova,moon,feddecorr}. The argument $\beta$ controls the level of data heterogeneity, where a smaller $\beta$ corresponds to a more heterogeneous level and $\beta$ is set to $0.1$ for default setting. The distribution for these datasets are shown in Figure.~\ref{fig:distribution}
    
    \item NIID-2 makes each client hold data of only partial labels~\citep{fedavg,fedprox} ($2$ for Fashion-MNIST and CIFAR-10, $10$ for CIFAR-100). 
\end{enumerate}

\begin{figure}[t]
  \centering
    \subfigure[Fashion-MNIST]{
    \includegraphics[width=0.31\textwidth]{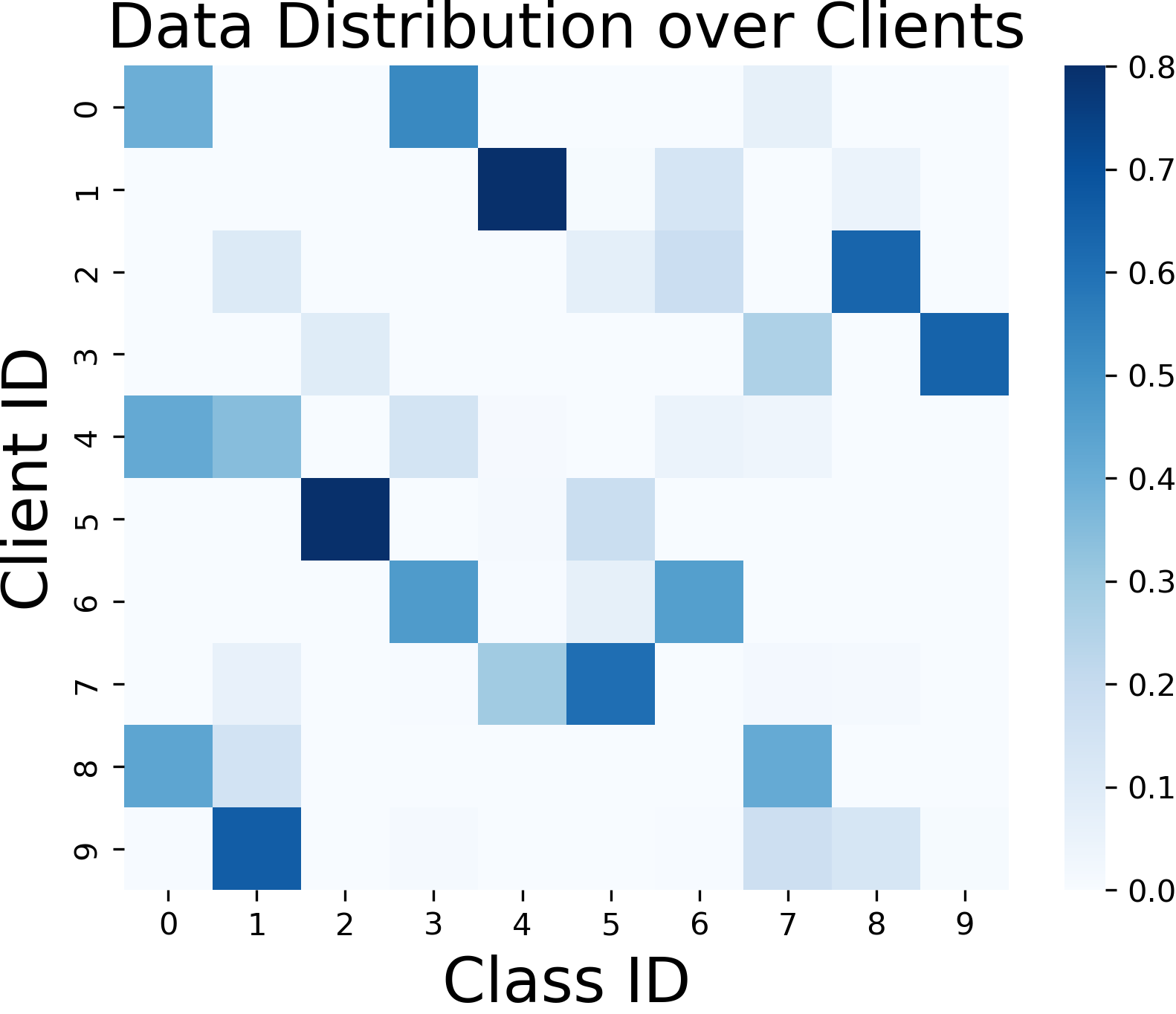}
    \label{fig:app_image3}
    }
    \hfill
  \subfigure[CIFAR-10]{
    \includegraphics[width=0.3\textwidth]{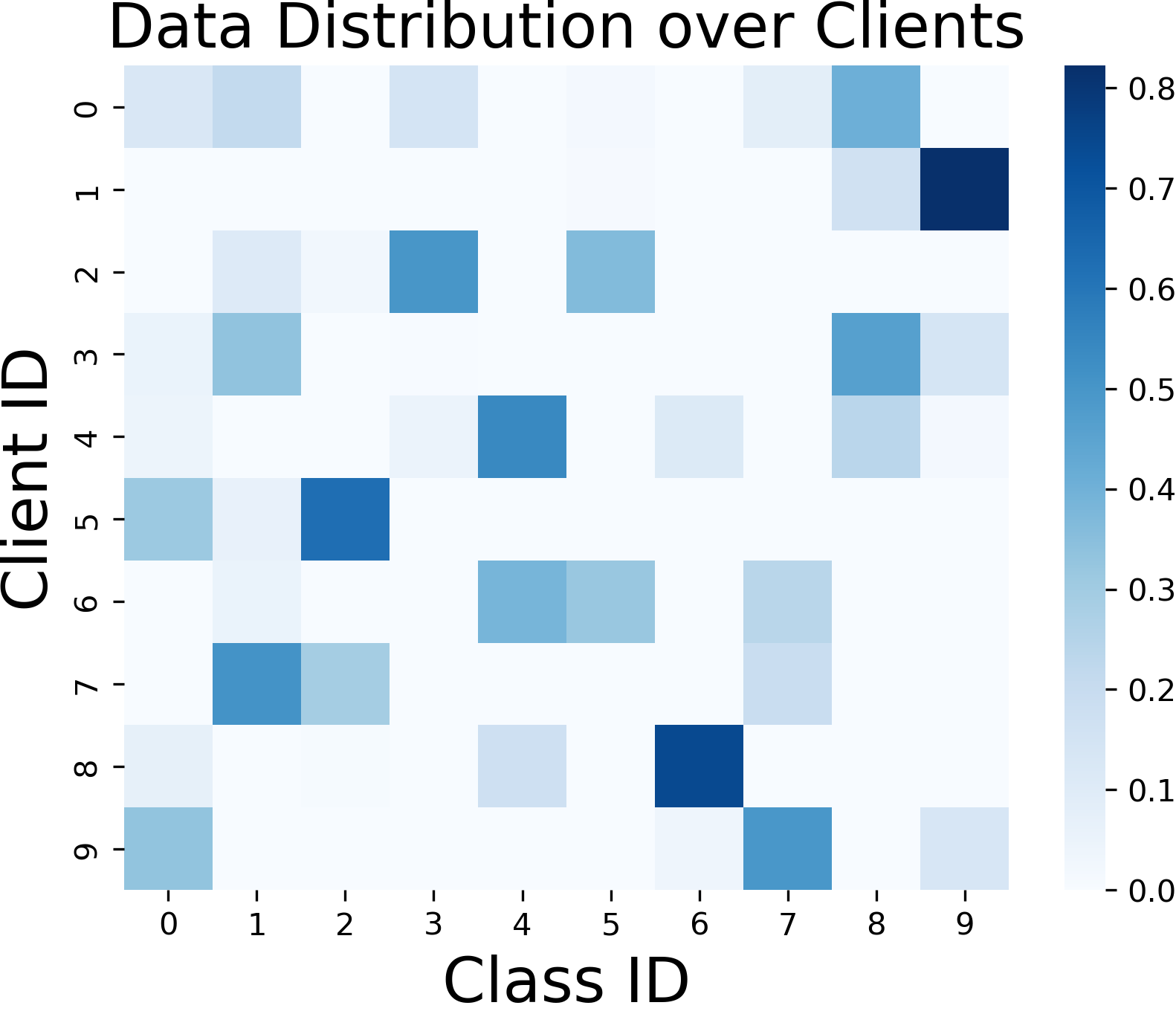}
    \label{fig:app_image1}
    }
  \hfill
  \subfigure[CIFAR-100]{
    \includegraphics[width=0.3\textwidth]{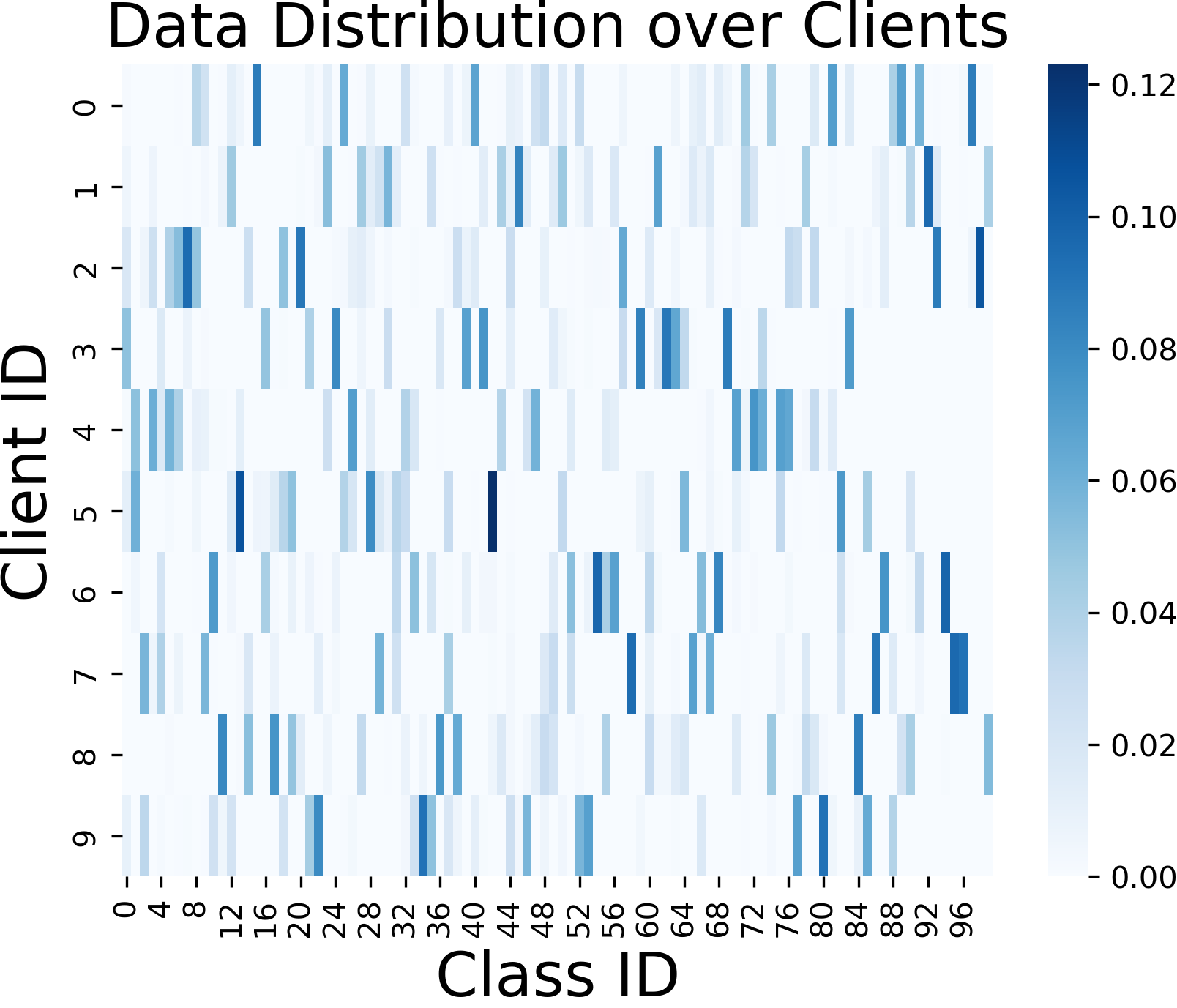}
    \label{fig:app_image2}
    }
  \hfill

  \caption{Data distribution of NIID-1 for Fashion-MNIST, CIFAR-10 and CIFAR-100}
  \label{fig:distribution}
\end{figure}

For FLAIR~\citep{flair}, the data heterogeneity is inherent as different local dataset is drawn from different user while different user has different preference and thus different data distribution.

\subsubsubsection{\textbf{Models.}} 1) For Fashion-MNIST and CIFAR-10, we use a simple convolutional neural network (CNN) that is frequently used in FL literature~\citep{moon}. The data flow in this model is: $5 \times 5$ convolution, max-pooling, $5 \times 5$ convolution, three fully-connected layer with hidden size of $120$, $84$ and $10$ respectively. 2) For CIFAR-100, we use a slightly different simple CNN. The data flow in this model is: $3 \times 3$ convolution, max-pooling, $3 \times 3$ convolution, max-pooling, $3 \times 3$ convolution, two fully-connected layer with hidden size of $128$ and $100$ respectively. 3) For FLAIR, we use ResNet18~\citep{resnet} from PyTorch API.

\subsubsection{Hyper-parameter-free knowledge distillation for image classification.} Following the example in~\ref{sec:details_gen}, for a classification task, we also propose a hyper-parameter-free knowledge distillation manner. Specifically, denote the number of samples from the original real dataset as $N_{real}=|\bm{d}_k|$ and the number of samples required to complement a balanced dataset $N_{gen}=|\hat{\bm{d}}_k|$, we set the coefficient of task-driven loss $\mathcal{L}_{c}$ as $\frac{N_{real}}{N_{real}+N_{gen}}$ and the coefficient of knowledge-distillation loss $\mathcal{L}_{kd}$ as $\frac{N_{gen}}{N_{real}+N_{gen}}$. In this case, samples from all categories are supervised with similar intensity, relieving the issue of imbalanced learning and contributing to enhanced overall performance.

\subsubsubsection{\textbf{Baselines and hyper-parameters.}} Here, we list the applied hyper-parameters for the compared baselines.

\begin{enumerate}
    \item FedAvgM~\citep{fedavgm}: the momentum coefficient is set to $0.1$.
    \item FedProx~\citep{fedprox}: the model regularization strength $\mu$ is set to $0.01$.
    \item MOON~\citep{moon}: the feature regularization strength $\mu$ is set to $0.01$.
    \item FedSAM~\citep{fedsam}: the constant to control the radius of the perturbation $\rho$ is set to $0.5$.
    \item VHL~\citep{vhl}: the coefficient of conditional distribution mismatch penalty $\lambda$ is set to $1.0$ and the number of epochs for feature alignment is set to $5$.
    \item FedExP~\citep{fedexp}: the $\epsilon$ is set to $1e-3$. Note that in our experiments, the performance of FedExP is unstable. Through careful checking, we find that the experiments in the original paper of FedExP are conducted by setting local iteration $\tau=20$. While in our paper, we are more interested in larger number of iterations (e.g., $\tau=400$) which is common in FL. We conjecture that such large iteration number causes the model update more divergent and thus leads to unstable performance of FedExP.
    \item FedDecorr~\citep{feddecorr}: the feature regularization strength $\beta$ is set to $0.01$.
\end{enumerate}

\subsection{Effects of FL Arguments}
\label{app:fl_arguments}

In order to validate the robustness of our method for various federated learning settings, we conducted experiments using the CIFAR-10 dataset with NIID-1 distribution.

\textbf{Iteration number of local model training.} Here we evaluate the effect of the number of iteration for local training on CIFAR-10 with NIID-1 data distribution. The results are shown in Table~\ref{table:iteration}. From the table, we see that 1) when $\tau$ is relatively small, the accuracy of all methods rise with the number of local iteration increasing; while when the number of local iteration gets too large (e.g., 800), the accuracy of all approaches drops due to the drift of local updates. 2) Nevertheless, our method FedCOG consistently outperforms the other methods in all scenarios.

\begin{table*}[h]
\caption{Effects of iteration number of local model training $\tau$. FedCOG consistently performs the best for different $\tau$.}
\label{table:iteration}
\begin{center}
\begin{tabular}{cccccc}
\toprule
$\tau$  & 200 & 400 & 600 & 800 \\
\midrule
FedAvg & 61.46 & 64.68 & 63.45 & 62.39 \\
FedProx  & 61.39 & 63.90 & 62.13 & 61.48 \\
MOON  & 61.12 & 63.59 & 63.27 & 62.81 \\
\textbf{FedCOG} & \textbf{61.73} & \textbf{65.61} & \textbf{64.89} & \textbf{64.98} \\
\bottomrule
\end{tabular}
\end{center}
\end{table*}

\textbf{Partial client participation scenarios.} For this experiment, we consider $K=50$ clients in total and sample partial clients to be available at each round. We compare FedCOG with FedAvg~\citep{fedavg}, FedProx~\citep{fedprox}, MOON~\citep{moon} here and report the results in Table~\ref{table:partial_rate}. From the table, we see that 1) as the participation rate increases, the performance of all methods increase since more clients are available at each round. 2) FedCOG consistently outperforms the other methods across all participation rates. Specifically, FedCOG outperforms the second-best method (FedProx~\citep{fedprox}) by $14\%$ relatively under $0.2$ participation rate.

\begin{table*}[h]
\caption{Effects of partial client participation rate. We consider $K=50$ clients in total and change the sample rate for each round. FedCOG consistently performs the best for different participation rates.}
\label{table:partial_rate}
\begin{center}
\begin{tabular}{ccccc}
\toprule
$K=50$ & 0.1  & 0.2 & 0.3 & 0.4 \\
\midrule
FedAvg & 18.64 & 37.44 & 41.47 & 46.73 \\
FedProx & 20.42 & 38.36 & 42.38 & 47.42 \\ 
MOON & 20.84 & 35.55 & 41.03 & 46.57 \\ 
\textbf{FedCOG} & \textbf{21.36} & \textbf{43.44} & \textbf{48.28} & \textbf{48.37} \\
\bottomrule
\end{tabular}
\end{center}
\end{table*}

\textbf{Client number.} We investigate the effect of the number of clients for participation for CIFAR-10, see results in Table~\ref{table:client_number}. Our method consistently outperforms all other approaches across different numbers of clients.

\begin{table*}[h]
\caption{Effects of client number $K$. FedCOG consistently performs the best for different client numbers.}
\label{table:client_number}
\begin{center}
\begin{tabular}{cccccc}
\toprule
$K$ & 5 & 10 & 20 & 30 \\
\midrule
FedAvg & 61.00 & 61.77 & 57.51 & 58.03 \\
FedProx & 61.28 & 61.90 & 57.08 & 58.98 \\
MOON & 60.64 & 60.41 & 56.25 & 58.35 \\
\textbf{FedCOG} & \textbf{61.72} & \textbf{61.98} & \textbf{57.67} & \textbf{59.14} \\
\bottomrule
\end{tabular}
\end{center}
\end{table*}

\subsection{Further Analysis of FedCOG}
\label{app:analysis_fedcog}

\subsubsection{Personalization analysis}
\label{sec:per_app}

\textbf{Methodology.} FedCOG provides a better global model, which further provides a better initialization for local, personalized models. Specifically, with data generation and knowledge distillation, FedCOG achieves a better-performed aggregated global model, which is subsequently used as initialization for local model training. Such a better-performed global model provides a better initial point for local model training and thus benefits the model training (improves the personalization). This phenomenon is actually common as FedAvg with Fine-tuning is generally a strong baseline in personalized FL~\citep{fedrep,pfedgraph}, which is also explained in~\citep{collins2022fedavg}.

\textbf{Experiments.} (1) In Figure~\ref{fig:per_lm}, we have shown the advantages of FedCOG in the personalized setting by comparing it with two baselines in personalized FL: FedAvg/FedProx with Fine-tuning (local models before aggregation in FedAvg). Both baseline methods are strong and have been intensively verified in many personalized FL papers~\citep{fedrep,pfedgraph,collins2022fedavg}. Through the figure, we see that FedCOG consistently achieves the highest accuracy across clients.
(2) Besides, we conduct further evaluations on personalization by comparing with several representative personalized FL methods, namely, FedAvg with Finetuning, FedProx with Finetuning, FedRep~\citep{fedrep}, and Ditto~\citep{ditto}. We conduct experiments on CIFAR-10 NIID-1 and the results are shown in Table~\ref{table:personalized}. From the table, we see that FedCOG outperforms all these strong and representative baselines.

\begin{table}[h]
  \centering
  \caption{Accuracy(\%) comparisons with four representative personalized FL methods across CIFAR-10 dataset at NIID-1 heterogeneity level.}
    \begin{tabular}{cccccc}
    \toprule
    Method   & FedAvg+FT & FedProx+FT & FedRep & Ditto & \textbf{FedCOG+FT} \\
    \midrule
     Acc \%      &  76.16  & 76.38 & 74.90 & 76.57 & \textbf{78.34} \\
    \bottomrule
    \end{tabular}
    \label{table:personalized}
\end{table}

\subsubsection{Generated data analysis}
\label{sec:gen_app}

Here, we provide analysis on the generated data.

\textbf{Visualization of generated data.}
We visualize the generated data after various generation steps. CIFAR-10 consists of 32x32 colour images in 10 classes: 
airplane, automobile, bird, cat, deer, dog, frog, horse, ship and truck. From Figure~\ref{fig:visual_gen}, we see that 
1) when the generation step is set to 0, the synthetic data produced is purely random noise. As the number of iterations increases, the generated images start to contain more specific patterns related to their corresponding classes. For instance, when generating images of ships and horses, the abstract shape of a hull and a horse are included in the output. 2) In this experiment, we validate the federated learning framework is privacy preserving, clients could not recover detailed private data from either client side or server side. The patterns in each image are highly abstract and contain certain category-specific features, but overall, we cannot invert a trained network to generate class-specific input images from random noise. 

\textbf{Effect of generated data.}
(1) The synthetic data is optimized to be correctly predicted by the global model, thus, it can be seen as a representative of the knowledge of global model. Then, during training, we use the synthetic data to distill knowledge from global model to local model, which basically forces the local model to not forget what the global model has already known. This would benefit the updating process of global model and will not be affected even if the synthetic data is not visually the same as real data. (2) We do not directly use the generated data to train the local model (i.e., using cross entropy loss to train local model without guidance from global model), which prevents the local model from overly fitting the generated data that could contain some unrelated properties.

\begin{figure}[htbp]
  \centering
  \subfigure[Original CIFAR-10 dataset]{
    \includegraphics[width=0.95\textwidth]{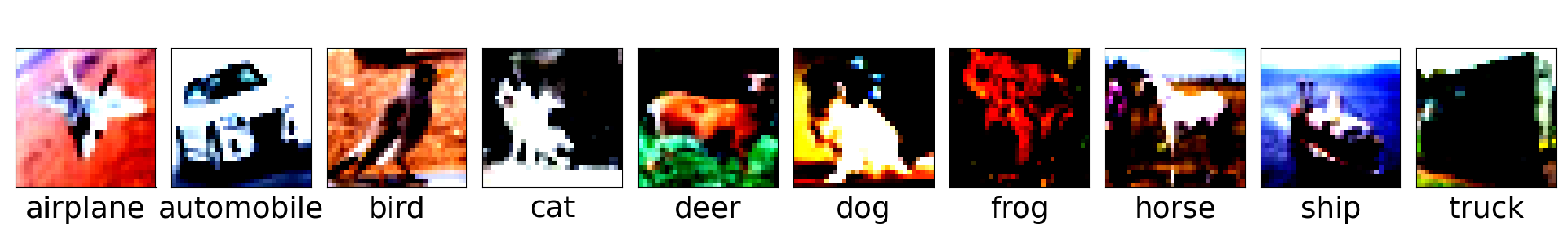}}
  
  \subfigure[Generation step: 0]{
    \includegraphics[width=0.95\textwidth]{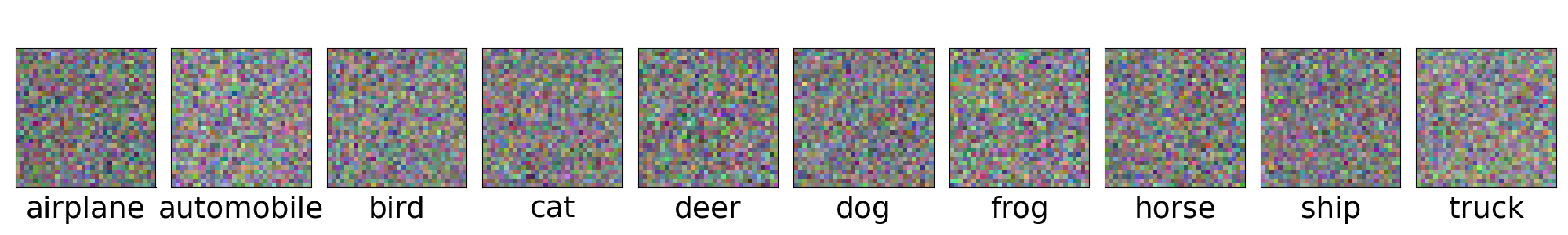}}
    
  \subfigure[Generation step: 100]{
    \includegraphics[width=0.95\textwidth]{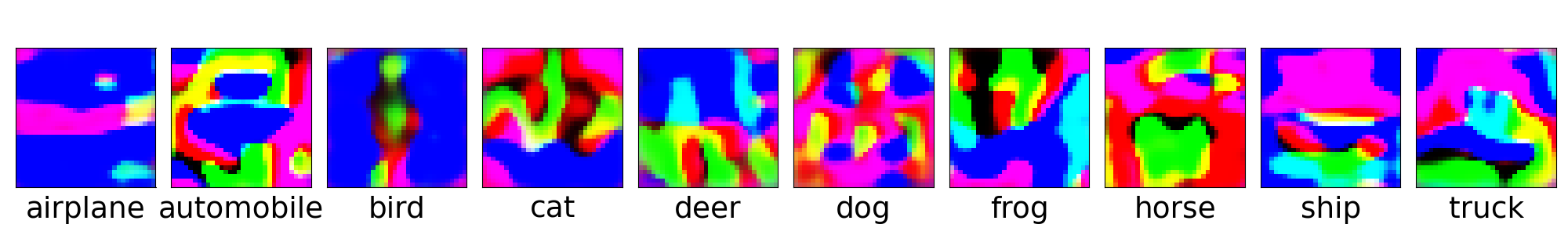}}
    
  \subfigure[Generation step: 200]{
    \includegraphics[width=0.95\textwidth]{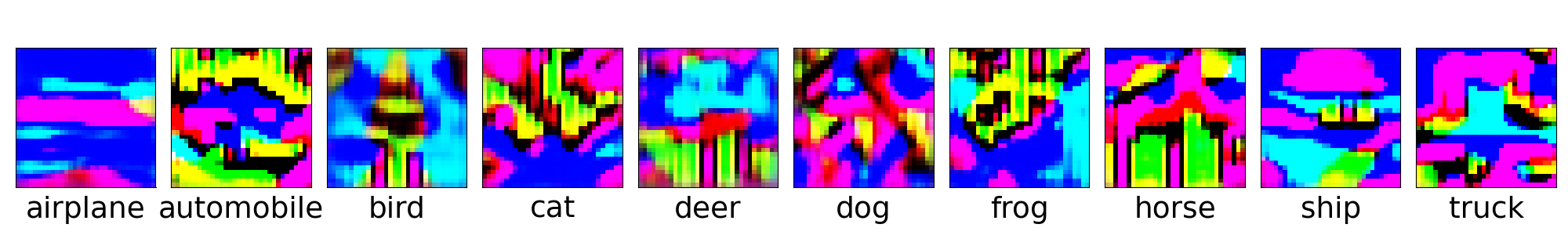}}
    
  \subfigure[Generation step: 500]{
    \includegraphics[width=0.95\textwidth]{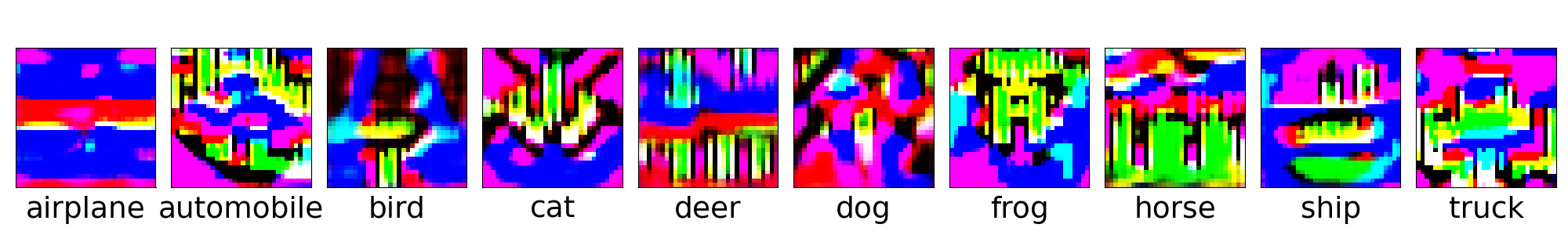}}
  
  \subfigure[Generation step: 1000]{
    \includegraphics[width=0.95\textwidth]{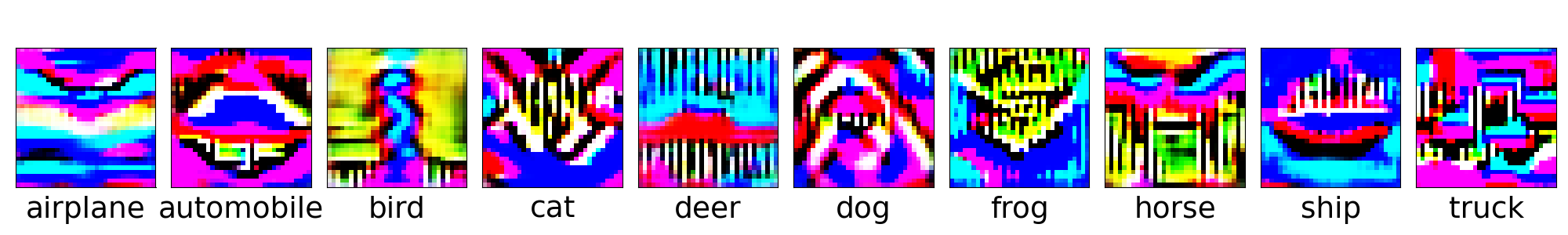}}
  
  \caption{Visualization of generated data and CIFAR-10. The generated data contains some abstract patterns for the corresponding category, but will not reveal private details.}
  \label{fig:visual_gen}
\end{figure}

\subsubsubsection{\textbf{Number of generation steps.}} Here we experiment on how the number of generation steps affects performance on CIFAR-10 and report the results in Table~\ref{table:steps}. From the table, we see that 1) generally, a moderate size (e.g., 100) of generation step contributes to better performance. 2) The performance is degraded when the generation step is too small or too large. This is reasonable as when the generation step is small, the image is similar to pure noise that may fail to well complement the original dataset; when the generation step is large, the optimizing process leads to over-fitting too many details and thus could even bring negative impacts.

\begin{table*}[t]
\caption{Effects of generation steps. A moderate size of generation step (e.g., 100) contributes to higher accuracy.}
\label{table:steps}
\begin{center}
\begin{tabular}{ccccccc}
\toprule
Generation step & 0 & 50 & 100 & 500 & 1000 \\
\midrule
Accuracy & 60.98  & 65.40 & 65.44 & 64.80 & 64.92 \\
\bottomrule
\end{tabular}
\end{center}
\end{table*}

\subsubsubsection{\textbf{Effects of introducing generator.}} In this experiment, we analyze the effects of introducing a learnable generator into the process of data generation. We insert a learnable convolutional neural network between the learnable noisy inputs and the taks-specific models. The introduced generator consists of a linear layer with hidden size 8192, a upsampling and three $3 \times 3$ convolution layers. After introducing the generator, we keep using the same hyper-parameters as before. We see that introducing a generator brings performance gain to NIID-2 setting ($1.5\%$ absolute accuracy improvement), but degrades performance to NIID-1 setting ($0.93\%$ absolute accuracy deterioration). This indicates that using the same set of hyper-parameters, FedCOG without generator and FedCOG with a learnable generator perform on par. Such results are reasonable since we do not carefully tune hyper-parameters and the generator is training from scratch. However, we believe that with careful hyper-parameter tuning and per-trained generator, the performance can be further improved, though, we leave this to future works.

\subsubsection{FedCOG under Resource Heterogeneity}
\label{sec:resource_app}

In practice, FL clients may vary in their available resources. Several methods can be employed to fit this scenario, including:

First, we can use a small batch size during data generation for resource-constrained clients, thereby mitigating memory constraints on these clients. By adjusting the batch size to match the limited memory capacity, we enable efficient data generation without necessitating any alterations to the underlying algorithm. An illustrative example from our CIFAR-100 experiment underscores the feasibility of this approach. Using a simple convolutional neural network, the required number of learnable parameters for local model training is 69380. While for instance, when employing a batch size of 1, the number of learnable parameters during local model training dramatically reduces to 3x32x32=3072, thereby rendering it practical to generate data for clients with restricted resources.

Second, another interesting angle is to use different model sizes for different clients. In a system where different clients have different computing resources, we can tailor model sizes to match individual capabilities~\citep{heterofl,fjord}. By embracing this dynamic model-sizing strategy, clients constrained by limited resources can harness models with reduced dimensions, such as employing sub-sampled CNNs with fewer channels~\citep{heterofl}.

Third, we introduce a new avenue for future exploration under our framework; that is, the adoption of variable generator sizes based on the clients' resource profiles. Specifically, resource-constrained clients solely focus on learning the input tensors (no generator, as done in our paper). While clients with more resources could learn to generate data with the help of a pre-trained generative model. We believe that this approach would be an interesting future direction.

\subsubsection{Generation Time v.s. Number of Generated Samples}

The process of data generation in FedCOG is efficient as the only learnable parameters are the inputs (which is relatively small compared to model parameters) and the required optimization steps is only 100.
To further verify this, we adjust the number of generated samples and record the required optimization time for each client at each round.
We vary the number of generated samples in the range of [16, 32, 64, 128, 256, 512, 1024] and report the corresponding generation time in Table~\ref{table:generation_time}.
From the table, we see that the generation process is quite efficient.
Throughout the ppaer, we set the number of generated samples as 256, which costs only 0.81 seconds.
As a reference, the conventional SGD-based local model training takes 14.81 seconds.
Also, even when the number of generated samples is 1024, the required generation time is still only 1.27 seconds, indicating that our method is efficient.

\begin{table*}[h]
\caption{Required generation time under different numbers of generated samples.}
\label{table:generation_time}
\begin{center}
\begin{tabular}{cccccccc}
\toprule
Num. & 16 & 32 & 64 & 128 & 256 & 512 & 1024 \\
\midrule
Time & 0.72 & 0.78 & 0.79 & 0.80 & 0.81 & 0.83 & 1.27 \\
\bottomrule
\end{tabular}
\end{center}
\end{table*}

\subsection{Visualization of Convergence Curves}

Here, we visualize curves of different methods, where x-axis denotes the round index and the y-axis denotes the test accuracy of global model.
In this experiment, our method is based on intial 50 rounds of FedAvg and 20 rounds of FedCOG to see a clearer improvement brought by FedCOG.
From Figure~\ref{fig:curves}, we see that 1) after 70 rounds of communication, the performance of global models of all methods saturate after 70 rounds of communication.
2) We see that our method consistently performs the best.

Please note that at this case, we are more interested at the performance improvement brought by FedCOG over FedAvg, which is significant through the figures.
FedCOG can be further integrated with other methods such as FedProx~\cite{fedprox} and MOON~\cite{moon}; please refer to Table~\ref{table:modularity}.

\begin{figure}[t]
    \centering
    \subfigure[Fashion-MNIST-NIID1]{
		\includegraphics[width=0.31\columnwidth]{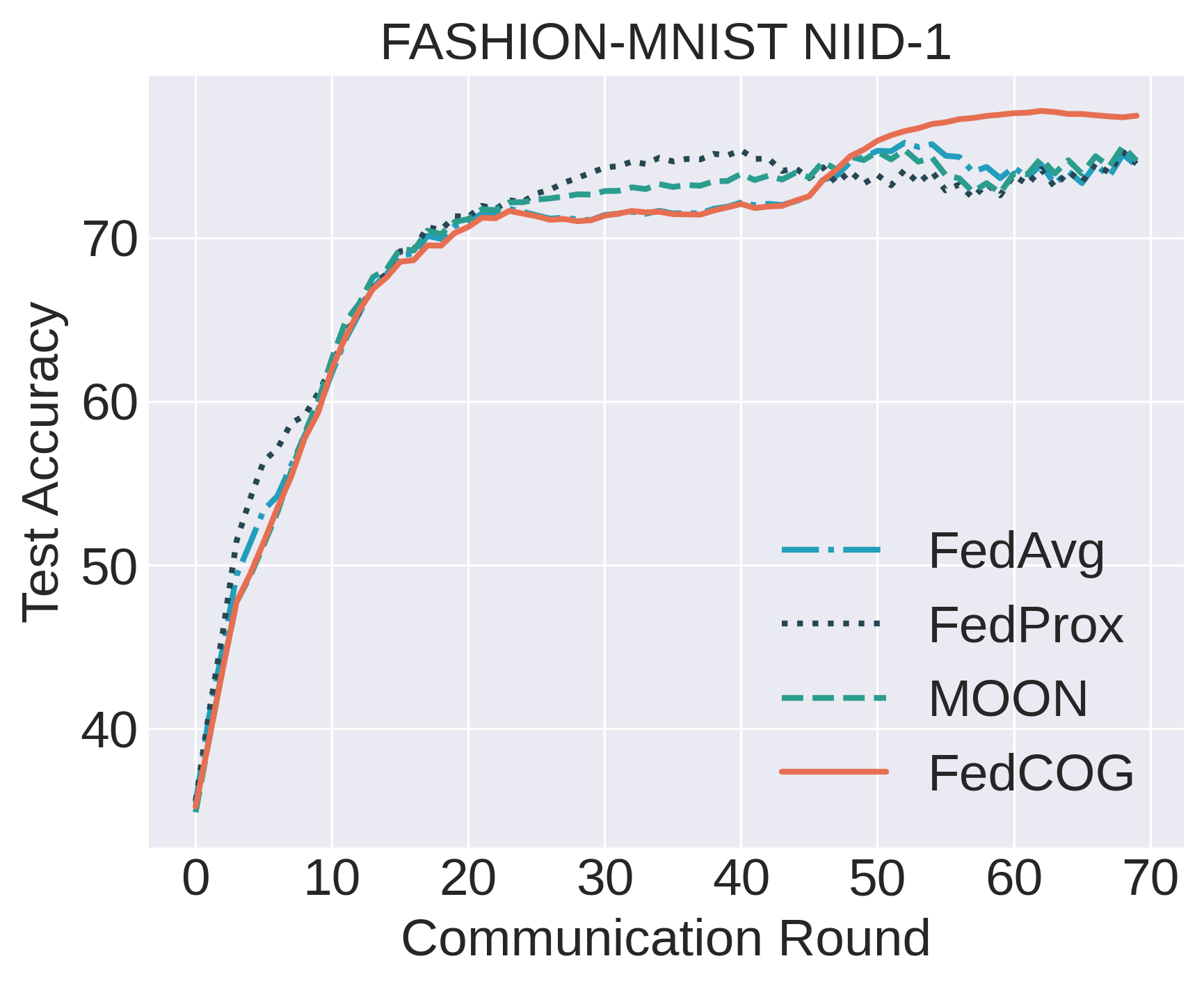}
	}
    \subfigure[CIFAR-10-NIID1]{
        \includegraphics[width=0.31\columnwidth]{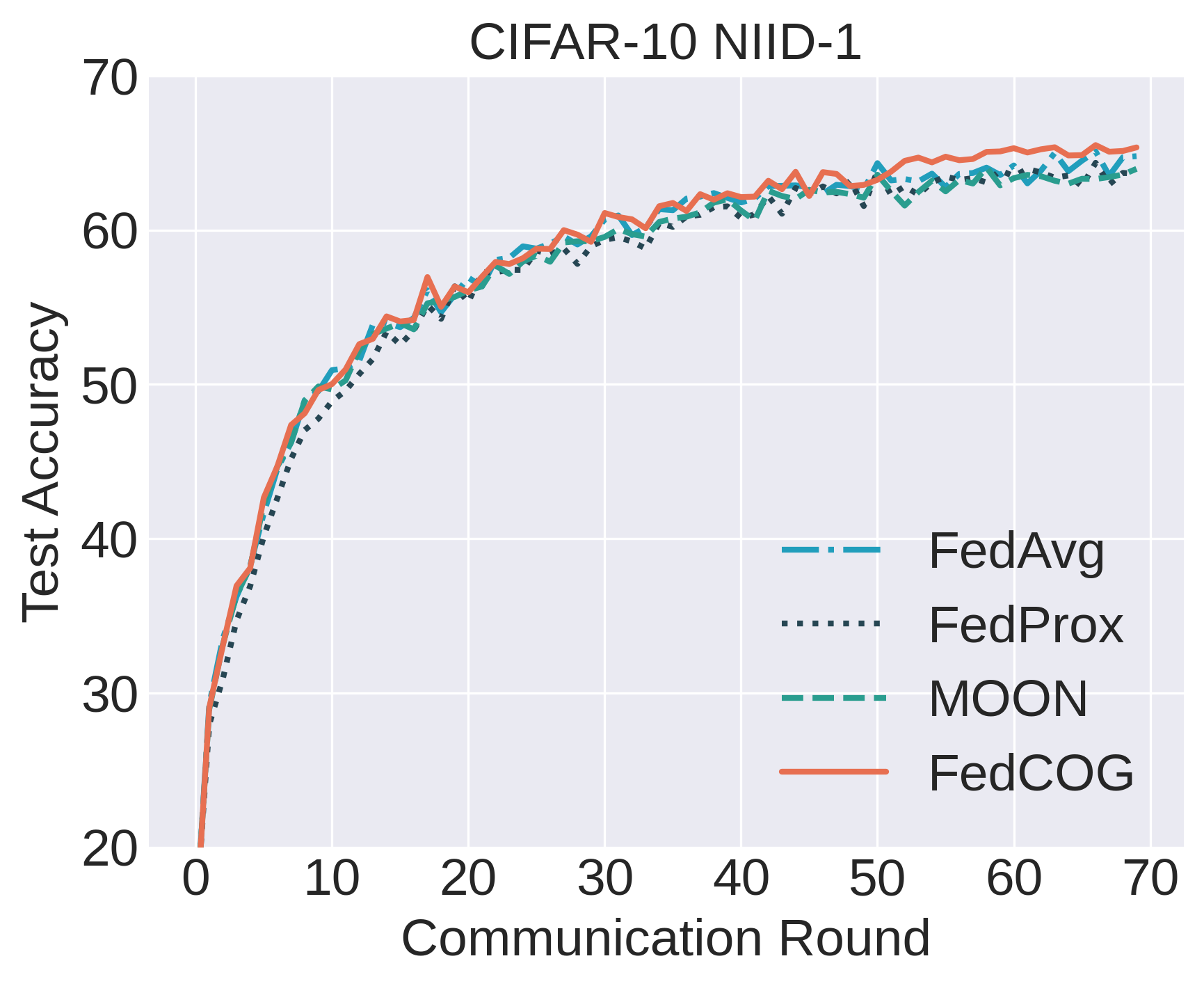}
    }
    \subfigure[CIFAR-10-NIID2]{
        \includegraphics[width=0.31\columnwidth]{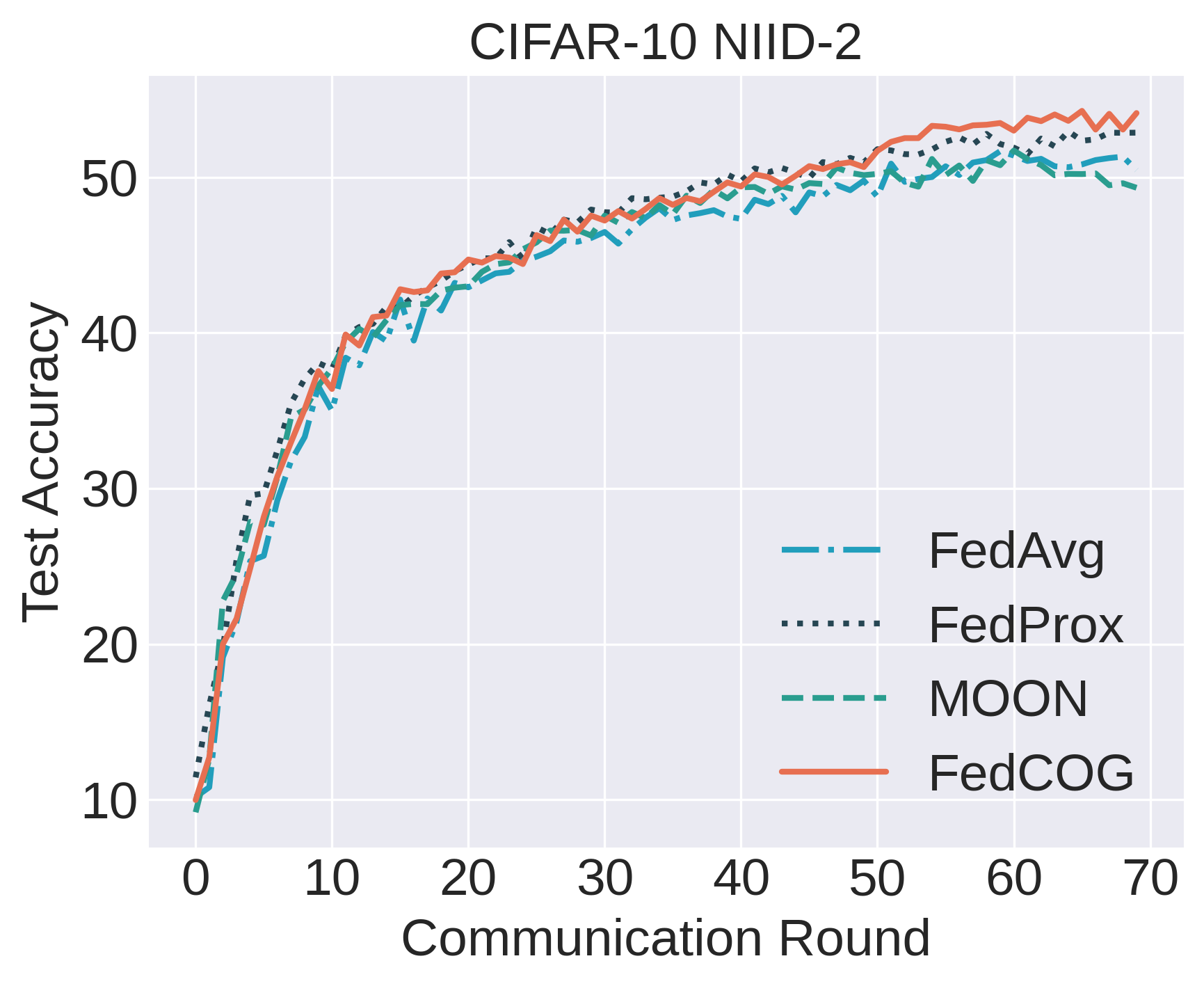}
    }    
    \caption{Visualization of convergence curves. We see that performance saturation after 70 rounds for most methods. Despite only launching 20 rounds of FedCOG (FedAvg for initial 50 rounds), our method still performs the best.}
    \label{fig:curves}
\end{figure}

\subsection{Theoretical Convergence Analysis}

For simplicity, assume that all the clients share the same generated consensus data (this can be achieved by generating data based on only global model and sharing the same random seed). We denote the global objective function as
\begin{equation}
    \Phi(\bm{\theta};\mathcal{D})=\sum_{k=1}^K p_k \Phi_k(\bm{\theta};\mathcal{D}) = \sum_{k=1}^K p_k \left[ F_k(\bm{\theta}) + Q_k(\bm{\theta};\mathcal{D}) \right],
\end{equation}
where $F_k$ is task loss, $Q_k$ is KD loss on generated dataset $\mathcal{D}$, and $\sum_{k=1}^K p_k = 1$. Then, we show four conventional assumptions in FL theoretical literature~\cite{fednova}.

\begin{assumption}[Smoothness]
\label{ass_smooth}
Each loss function $\Phi_k(\bm{\theta};\mathcal{D})$ is Lipschitz-smooth.
\end{assumption}

\begin{assumption}[Bounded Scalar]
\label{ass_scalar}
$\Phi_k(\bm{\theta};\mathcal{D})$ is bounded below by $\Phi_{inf}$.
\end{assumption}

\begin{assumption}[Unbiased Gradient and Bounded Variance]
\label{ass_grad}
For each client, the stochastic gradient is unbiased: $\mathbb{E}_\xi[g_k(\bm{\theta}|\xi)] = \nabla \Phi_k(\bm{\theta};\mathcal{D})$, and has bounded variance: $\mathbb{E}_\xi[||g_k(\bm{\theta}|\xi)-\nabla \Phi_k(\bm{\theta};\mathcal{D})||^2] \leq \sigma^2$.
\end{assumption}

\begin{assumption}[Bounded Dissimilarity]
\label{ass_diss}
For any set of weights $\{ p_k \geq 0 \}_{k=1}^K$ subject to $\sum_{k=1}^Kp_k=1$, there exists constants $\beta^2 \geq 1$ and $\kappa^2 \geq 0$ such that $\sum_{k=1}^K p_k||\nabla \Phi_k(\bm{\theta};\mathcal{D})||^2 \leq \beta^2 ||\nabla \Phi(\bm{\theta};\mathcal{D})||^2 + \kappa^2$.
\end{assumption}

Additionally, we show a lemma that is critical for the derivation of our theoretical analysis.

\begin{lemma}[Non-Increasing Global Loss]
\label{lemma}
\begin{equation}
\label{eq:lemma}
    \Phi(\bm{\theta}^{(t+1)};\mathcal{D}^{(t+1)}) \leq \Phi(\bm{\theta}^{(t+1)};\mathcal{D}^{(t)}).
\end{equation}
\end{lemma}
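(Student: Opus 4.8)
The plan is to reduce the claim to an optimality (minimizer) property of the generation step, exploiting the additive structure of the objective. First I would note that at the fixed model $\bm{\theta}^{(t+1)}$ the objective splits as
\begin{equation}
\Phi(\bm{\theta}^{(t+1)};\mathcal{D}) = \underbrace{\sum_{k=1}^K p_k F_k(\bm{\theta}^{(t+1)})}_{\text{independent of } \mathcal{D}} + \sum_{k=1}^K p_k Q_k(\bm{\theta}^{(t+1)};\mathcal{D}).
\end{equation}
Since the aggregated task loss does not depend on the generated data, comparing $\Phi$ across two datasets is equivalent to comparing only the aggregated knowledge-distillation term $\sum_{k=1}^K p_k Q_k(\bm{\theta}^{(t+1)};\cdot)$, which is the sole data-dependent contribution.

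Next I would appeal to how $\mathcal{D}^{(t+1)}$ is produced. Under the stated simplification that all clients share a single dataset generated from the global model only, $\mathcal{D}^{(t+1)}$ is the output of minimizing the round-$(t{+}1)$ generation objective over the admissible input tensors, evaluated at the current global model $\bm{\theta}^{(t+1)}$. Identifying this data-dependent objective with $\sum_{k=1}^K p_k Q_k(\bm{\theta}^{(t+1)};\cdot)$, the dataset $\mathcal{D}^{(t+1)}$ is a minimizer of $\mathcal{D} \mapsto \Phi(\bm{\theta}^{(t+1)};\mathcal{D})$. Because the previous-round dataset $\mathcal{D}^{(t)}$ lives in the same feasible set (same tensor shape and label budget), it is an admissible competitor, so by optimality
\begin{equation}
\Phi(\bm{\theta}^{(t+1)};\mathcal{D}^{(t+1)}) = \min_{\mathcal{D}} \Phi(\bm{\theta}^{(t+1)};\mathcal{D}) \leq \Phi(\bm{\theta}^{(t+1)};\mathcal{D}^{(t)}),
\end{equation}
which is exactly the claimed inequality. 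Intuitively, $\mathcal{D}^{(t+1)}$ is freshly tailored to $\bm{\theta}^{(t+1)}$ whereas $\mathcal{D}^{(t)}$ was tailored to the stale model $\bm{\theta}^{(t)}$, so re-generating can only decrease the loss at $\bm{\theta}^{(t+1)}$.

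The step I expect to be the main obstacle is justifying the identification used above, namely that the deployed generation procedure actually attains the global minimum of the data-dependent objective. In practice generation is a finite run of gradient descent on the inputs and reaches only a stationary point, so the exact minimizer may not be attained. To make the argument rigorous I would either (i) posit as part of the idealized setup that the inner generation is solved to optimality, so that the minimizer is attained and the displayed equality holds; or (ii) relax the requirement to the weaker sufficient condition that generation never returns a worse dataset than reusing $\mathcal{D}^{(t)}$, which holds automatically whenever $\mathcal{D}^{(t)}$ is available as an initialization and the optimizer is monotone. Either assumption delivers the inequality; option (i) is cleaner for feeding the lemma into the descent recursion of the convergence theorem, while option (ii) stays closer to the implemented algorithm.
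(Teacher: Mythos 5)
Your first step --- splitting $\Phi(\bm{\theta}^{(t+1)};\mathcal{D})$ into a dataset-independent task term and the aggregated KD term --- matches the paper. But the rest of the argument rests on a false identification. You claim that $\mathcal{D}^{(t+1)}$ is (ideally) a minimizer of $\mathcal{D}\mapsto\sum_k p_k Q_k(\bm{\theta}^{(t+1)};\mathcal{D})$ by "identifying this data-dependent objective" with the generation objective. That identification does not hold: the generation step minimizes $\mathcal{L}_{gen}$, which is the cross-entropy of the \emph{global model's} predictions on the learnable inputs plus a global--local disagreement term, whereas $Q_k$ is the KL divergence between the frozen teacher's outputs and the current local model's outputs. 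These are different functions of $\mathcal{D}$, so even an exact global minimizer of $\mathcal{L}_{gen}$ need not minimize $Q(\bm{\theta}^{(t+1)};\cdot)$, and neither of your two patches (assume the inner problem is solved to optimality; assume warm-started monotone descent) helps, because both idealize the wrong optimization problem.

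The mechanism that actually proves the lemma uses no optimality property of the generated data at all. The dataset index implicitly carries its teacher with it: the KD term attached to $\mathcal{D}^{(t+1)}$ uses the freshly synchronized teacher $\bm{\theta}^{(t+1)}$, and since the local model is re-initialized to that same global model, the term evaluated at $\bm{\theta}^{(t+1)}$ is a KL divergence of a distribution with itself,
\begin{equation}
Q(\bm{\theta}^{(t+1)};\mathcal{D}^{(t+1)}) = KL\big(\bm{p}(\bm{\theta}^{(t+1)};\cdot)\,\big\|\,\bm{p}(\bm{\theta}^{(t+1)};\cdot)\big) = 0,
\end{equation}
for \emph{any} generated set, while $Q(\bm{\theta}^{(t+1)};\mathcal{D}^{(t)})$ is a KL divergence against the stale teacher $\bm{\theta}^{(t)}$ and is therefore nonnegative. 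Adding the common term $F(\bm{\theta}^{(t+1)})$ to both sides gives the inequality immediately. Your intuition that "re-generating can only decrease the loss" is directionally right, but the reason is that regeneration resets the KD penalty to its exact minimum of zero by re-anchoring the teacher, not that the new dataset is optimal for the KD objective.
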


\begin{proof}
    We first expand both two sides of~\eqref{eq:lemma}.
    \begin{align}
        &\Phi(\bm{\theta}^{(t+1)};\mathcal{D}^{(t+1)})  = F (\bm{\theta}^{(t+1)}) + Q (\bm{\theta}^{(t+1)};\mathcal{D}^{(t+1)}) = F (\bm{\theta}^{(t+1)}) + KL (\bm{\theta}^{(t+1)}||\bm{\theta}^{(t+1*)},\mathcal{D}^{(t+1)}),\\
        &\Phi(\bm{\theta}^{(t+1)};\mathcal{D}^{(t)}) = F (\bm{\theta}^{(t+1)}) + Q (\bm{\theta}^{(t+1)};\mathcal{D}^{(t)}) = F (\bm{\theta}^{(t+1)}) + KL (\bm{\theta}^{(t+1)}||\bm{\theta}^{(t*)},\mathcal{D}^{(t)}),
    \end{align}
    Since $KL (\bm{\theta}^{(t+1)}||\bm{\theta}^{(t+1*)},\mathcal{D}^{(t+1)}) = 0$ and $KL (\bm{\theta}^{(t+1)}||\bm{\theta}^{(t*)},\mathcal{D}^{(t)}) \geq 0$, we naturally have:
    \begin{equation}
    \Phi(\bm{\theta}^{(t+1)};\mathcal{D}^{(t+1)}) \leq \Phi(\bm{\theta}^{(t+1)};\mathcal{D}^{(t)}).
\end{equation}
\end{proof}

Based on these assumptions and this lemma, we can finally prove the following theorem:
\begin{theorem}[Optimization bound of the global objective function]
\label{theorem}
    Under these assumptions, if we set $\eta L \leq min\{ \frac{1}{2 \tau}, \frac{1}{\sqrt{2 \tau (\tau-1)(2 \beta^2 +1)}} \}$, the optimization error will be bounded as follows:
\begin{align}
& \mathop{\min}_{t} \mathbb{E} ||\nabla \Phi (\bm{\theta}^{(t)};\mathcal{D}^{(t)})||^2 \notag\\
\leq & \frac{4[\Phi(\bm{\theta}^{(0)};\mathcal{D}^{(0)})-\Phi_{inf}]}{\tau \eta T}+4\eta L \sigma^2 \sum_{k=1}^K p_k^2 + 3 (\tau-1) \eta^2 \sigma^2L^2 + 6\tau (\tau-1) \eta^2 L^2 \kappa^2,
\end{align}
where $\eta$ is learning rate for local model training and $\tau$ is the number of local model updates.
\end{theorem}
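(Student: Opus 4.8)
The plan is to follow the standard single-round descent argument for FedAvg-type methods (as in the FedNova analysis that these four assumptions are borrowed from), with the one genuinely new ingredient being the per-round change of the objective caused by regenerating the data $\mathcal{D}^{(t)}$, which is exactly what \cref{lemma} is designed to absorb. Throughout, I would work within a single communication round with the data held fixed at $\mathcal{D}^{(t)}$, writing $\bm{\theta}_k^{(t,r)}$ for client $k$'s iterate after $r$ local SGD steps, so that $\bm{\theta}_k^{(t,0)}=\bm{\theta}^{(t)}$ and $\bm{\theta}^{(t+1)}=\sum_k p_k \bm{\theta}_k^{(t,\tau)}$, giving the aggregated update $\bm{\theta}^{(t+1)}-\bm{\theta}^{(t)}=-\eta\sum_k p_k \sum_{r=0}^{\tau-1} g_k(\bm{\theta}_k^{(t,r)})$.

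First I would invoke \cref{ass_smooth} to write the $L$-smoothness descent inequality for $\Phi(\cdot;\mathcal{D}^{(t)})$ between $\bm{\theta}^{(t)}$ and $\bm{\theta}^{(t+1)}$, and then take the conditional expectation over the stochastic gradients drawn during the round. The linear (inner-product) term is handled using the unbiasedness part of \cref{ass_grad}, while the quadratic term is split into a mean part and a variance part; the variance part yields the $4\eta L\sigma^2\sum_k p_k^2$ contribution through the bounded-variance bound together with the independence of the per-client noise, which is what produces the $\sum_k p_k^2$ weighting.

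The core technical step is controlling \textbf{client drift}. After expanding the inner-product term, the gradients appear at the drifted local iterates $\bm{\theta}_k^{(t,r)}$ rather than at the synchronization point $\bm{\theta}^{(t)}$, so I would add and subtract $\nabla\Phi_k(\bm{\theta}^{(t)};\mathcal{D}^{(t)})$ and bound the mismatch by $L^2\,\mathbb{E}\|\bm{\theta}_k^{(t,r)}-\bm{\theta}^{(t)}\|^2$ via smoothness. This drift is then controlled by a recursion in $r$: each local step contributes a gradient and a noise term, and unrolling the recursion together with \cref{ass_diss} produces the $\beta^2$-dependent amplification of $\|\nabla\Phi(\bm{\theta}^{(t)};\mathcal{D}^{(t)})\|^2$ together with the $\sigma^2$ and $\kappa^2$ terms carrying the $(\tau-1)$ and $\tau(\tau-1)$ factors. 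The two conditions on $\eta L$ are precisely calibrated for this: $\eta L\le \tfrac{1}{2\tau}$ keeps the coefficient of the negative $\|\nabla\Phi\|^2$ term bounded below by a positive constant (I would target $\tfrac14$), while $\eta L\le \tfrac{1}{\sqrt{2\tau(\tau-1)(2\beta^2+1)}}$ ensures the drift recursion contracts so that the $\beta^2$ feedback cannot overwhelm the descent. I expect this recursion, and pinning down the exact constants, to be the main obstacle.

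Finally I would assemble everything into a per-round inequality of the form $\mathbb{E}[\Phi(\bm{\theta}^{(t+1)};\mathcal{D}^{(t)})]\le \Phi(\bm{\theta}^{(t)};\mathcal{D}^{(t)})-\tfrac{\tau\eta}{4}\,\mathbb{E}\|\nabla\Phi(\bm{\theta}^{(t)};\mathcal{D}^{(t)})\|^2+(\text{error terms})$, then apply \cref{lemma} to replace the left-hand side by $\mathbb{E}[\Phi(\bm{\theta}^{(t+1)};\mathcal{D}^{(t+1)})]$ so that consecutive rounds telescope against the moving objective. Summing over $t=0,\dots,T-1$, lower-bounding the final iterate by $\Phi_{inf}$ through \cref{ass_scalar}, dividing by $\tfrac{\tau\eta T}{4}$, and passing from the average over $t$ to the minimum over $t$ then yields exactly the four summands in the stated bound.
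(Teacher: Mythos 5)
Your proposal follows essentially the same route as the paper's proof: an $L$-smoothness descent step with the data frozen at $\mathcal{D}^{(t)}$, unbiasedness for the linear term, independence plus bounded variance for the $\sum_k p_k^2$ quadratic term, a drift recursion in $r$ closed via the bounded-dissimilarity assumption (with the two learning-rate conditions playing exactly the roles you assign them, yielding the $-\tfrac{\tau\eta}{4}$ coefficient), and Lemma~5.5 invoked precisely where you place it, to let consecutive rounds telescope against the regenerated dataset. The paper merely packages the drift-mismatch step through the averaged full-batch gradients $\mathbf{h}_k^{(t)}$ and a polarization identity rather than an explicit add-and-subtract inside the inner product, which is a cosmetic difference.
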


\cref{theorem} establishes that when the number of communication rounds, $T$, tends towards infinity ($T \rightarrow \infty$), the expectation of the optimization error  remains bounded by a constant number for fixed $\eta$; see detailed proof in~\cref{app:proof_theorem}.
Further, based on~\cref{theorem}, we can deduce the following corollary when a suitable learning rate, denoted as $\eta$, is established:

\begin{corollary}[Convergence of the global objective function]
    By setting $\eta = \frac{1}{\sqrt{\tau T}}$, our method can converge to a stationary point given an infinite number of communication rounds $T$. Specifically, the bound could be reformulated as follows:
    \begin{align}
& \mathop{\min}_{t} \mathbb{E} ||\nabla \Phi (\bm{\theta}^{(t)};\mathcal{D}^{(t)})||^2 \notag \\ 
\leq & \frac{4[\Phi(\bm{\theta}^{(0)};\mathcal{D}^{(0)})-\Phi_{inf}]+4 L \sigma^2 \sum_{k=1}^K p_k^2}{\sqrt{\tau T}} + \frac{3 (\tau-1) \sigma^2L^2}{\tau T} + \frac{6(\tau-1) L^2 \kappa^2}{T}\\
= & \mathcal{O}(\frac{1}{\sqrt{\tau T}}) + \mathcal{O}(\frac{1}{T}) + \mathcal{O}(\frac{\tau}{T}).
\end{align}
\end{corollary}

This corollary indicates that as $T \rightarrow \infty$, the error's upper bound approaches $0$. Also, given a finite $T$, there exists a best $\tau$ that minimizes the error's upper bound. These analyses show that our method can achieve the same convergence rate as most methods, such as FedAvg~\cite{li2019convergence,fednova,fedfm}. Please see detailed proof in~\cref{app:proof_theorem}.

\subsubsection{Preliminaries}

For ease of writing, we use $g_k(\bm{\theta})$ to denote mini-batch gradient $g_k(\bm{\theta}|\xi)$ and $\nabla \Phi_k(\bm{\theta})$ to denote full-batch gradient. We further define the following two notions:

\begin{equation}
\text{Averaged Mini-batch Gradient:} \quad \mathbf{d}_k^{(t)}=\frac{1}{\tau} \sum_{r=0}^{\tau-1} g_k(\bm{\theta}_k^{(t,r)}),
\end{equation}

\begin{equation}
\text{Averaged Full-batch Gradient:} \quad \mathbf{h}_k^{(t)}=\frac{1}{\tau} \sum_{r=0}^{\tau-1} \nabla \Phi _k(\bm{\theta}_k^{(t,r)}).
\end{equation}

Then, the update of the global model between two rounds is as follows:
\begin{equation}
\bm{\theta}^{(t+1)}-\bm{\theta}^{(t)}=-\tau \eta \sum_{k=1}^K p_k \mathbf{d}_k^{(t)}.
\end{equation}

\subsubsection{Proof of Theorem}
\label{app:proof_theorem}

According to the Lipschitz-smooth property in Assumption~\ref{ass_smooth}, we have its equivalent form~\cite{l_smooth}:

\begin{align}
    & \quad \mathbb{E} \left[ \Phi (\bm{\theta}^{(t+1)};\mathcal{D}^{(t)}) \right] - \Phi (\bm{\theta}^{(t)};\mathcal{D}^{(t)})\notag \\
    &\leq \mathbb{E} \left[ \left \langle \nabla \Phi (\bm{\theta}^{(t)};\mathcal{D}^{(t)}), \bm{\theta}^{(t+1)}-\bm{\theta}^{(t)} \right \rangle \right] - \frac{L}{2} \mathbb{E} \left[ \left\| \bm{\theta}^{(t+1)} - \bm{\theta}^{(t)}  \right\|^2 \right]\\
    &= -\tau \eta \underbrace{\mathbb{E} \left[ \left\langle \nabla \Phi (\bm{\theta}^{(t)};\mathcal{D}^{(t)}), \sum_{k=1}^Kp_k\mathbf{d}_k^{(t)} \right\rangle \right]}_{T_1} + \frac{L\tau^2\eta^2}{2} \underbrace{\mathbb{E} \left[ \left\| \sum_{k=1}^K p_k \mathbf{d}_k^{(t)} \right\|^2 \right]}_{T_2}, \label{T1T2}
\end{align}
where the expectation is taken over mini-batches $\xi_k^{(t,r)}$, $\forall k \in {1,2,...,K}$, $r \in {0,1,...,\tau-1}$.

For ease of writing, we use $\Phi(\bm{\theta}^{(t)})$ and $\Phi_k(\bm{\theta}^{(t)})$ to denote $\Phi(\bm{\theta}^{(t)};\mathcal{D}^{(t)})$ and $\Phi_k(\bm{\theta}^{(t)};\mathcal{D}^{(t)})$ when it causes no ambiguity for the rest of paper.

\vspace{7mm}
\textbf{Bounding $T_1$ in (\ref{T1T2})}:

\begin{align}
    T_1 &= \mathbb{E} \left[ \left\langle \nabla \Phi (\bm{\theta}^{(t)}), \sum_{k=1}^K p_k (\mathbf{d}_k^{(t)}-\mathbf{h}_k^{(t)}) \right\rangle \right] + \mathbb{E} \left[ \left\langle \nabla \Phi (\bm{\theta}^{(t)}), \sum_{k=1}^K p_k \mathbf{h}_k^{(t)} \right\rangle \right] \\
    & = \mathbb{E} \left[ \left\langle \nabla \Phi (\bm{\theta}^{(t)}), \sum_{k=1}^K p_k \mathbf{h}_k^{(t)} \right\rangle \right] \label{T1_2} \\
    & = \frac{1}{2} \left\| \nabla \Phi(\bm{\theta}^{(t)}) \right\|^2 + 
        \frac{1}{2} \mathbb{E} \left[ \left\| \sum_{k=1}^K p_k \mathbf{h}_k^{(t)} \right\|^2 \right]
        - \frac{1}{2} \mathbb{E} \left[ \left\| \nabla \Phi(\bm{\theta}^{(t)}) - \sum_{k=1}^K p_k \mathbf{h}_k^{(t)} \right\|^2 \right], \label{T1_last}
\end{align}
where (\ref{T1_2}) uses the unbiased gradient assumption in Assumption~\ref{ass_grad}, such that $\mathbb{E}[\mathbf{d}_k^{(t)}-\mathbf{h}_i^{(t)}]=\mathbf{h}_k^{(t)}-\mathbf{h}_i^{(t)}=\mathbf{0}$. (\ref{T1_last}) uses the fact that $2\left\langle a,b \right\rangle = \left\| a \right\|^2 + \left\| b \right\|^2 - \left\| a-b \right\|^2$.

\vspace{7mm}
\textbf{Bounding $T_2$ in (\ref{T1T2})}:

\begin{align}
    T_2 &= \mathbb{E} \left[ \left\| \sum_{k=1}^K p_k (\mathbf{d}_k^{(t)}-\mathbf{h}_k^{(t)}) + \sum_{k=1}^K p_k \mathbf{h}_k^{(t)} \right\|^2 \right] \\
    & \leq 2 \mathbb{E} \left[ \left\| \sum_{k=1}^K p_k (\mathbf{d}_k^{(t)}-\mathbf{h}_k^{(t)}) \right\|^2 \right]
        + 2 \mathbb{E} \left[ \left\| \sum_{k=1}^K p_k \mathbf{h}_k^{(t)} \right\|^2 \right] \label{T2_2} \\
    & = 2 \sum_{k=1}^K p_k^2 \mathbb{E}  \left[ \left\| \mathbf{d}_k^{(t)}-\mathbf{h}_k^{(t)} \right\|^2 \right]
        + 2 \mathbb{E} \left[ \left\| \sum_{k=1}^K p_k \mathbf{h}_k^{(t)} \right\|^2 \right] \label{T2_3} \\
    & = \frac{2}{\tau^2} \sum_{k=1}^K p_k^2 \mathbb{E}  \left[ \left\| \sum_{r=0}^{\tau-1} (g_k(\bm{\theta}_k^{(t,r)})-\nabla \Phi_k(\bm{\theta}_k^{(t,r)})) \right\|^2 \right]
        + 2 \mathbb{E} \left[ \left\| \sum_{k=1}^K p_k \mathbf{h}_k^{(t)} \right\|^2 \right] \\
    & = \frac{2}{\tau^2} \sum_{k=1}^K p_k^2 \sum_{r=0}^{\tau-1} \mathbb{E}  \left[ \left\| g_k(\bm{\theta}_k^{(t,r)})-\nabla \Phi_k(\bm{\theta}_k^{(t,r)}) \right\|^2 \right]
        + 2 \mathbb{E} \left[ \left\| \sum_{k=1}^K p_k \mathbf{h}_k^{(t)} \right\|^2 \right] \label{T2_5} \\
    & \leq \frac{2\sigma^2}{\tau} \sum_{k=1}^K p_k^2 
        + 2 \mathbb{E} \left[ \left\| \sum_{k=1}^K p_k \mathbf{h}_k^{(t)} \right\|^2 \right] \label{T2_last}
\end{align}
where (\ref{T2_2}) uses $\left\| a+b \right\|^2 \leq 2\left\| a \right\|^2 + 2\left\| b \right\|^2$, (\ref{T2_3}) uses the fact that clients are independent to each other so that $\mathbb{E} \left\langle \mathbf{d}_k^{(t)}-\mathbf{h}_k^{(t)}, \mathbf{d}_n^{(t)}-\mathbf{h}_n^{(t)} \right\rangle=0, \forall k \neq n$. (\ref{T2_5}) uses Lemma 2 in~\cite{fednova} and (\ref{T2_last}) uses bounded variance assumption in Assumption~\ref{ass_grad}.

Plug (\ref{T1_last}) and (\ref{T2_last}) back into (\ref{T1T2}), we have
\begin{align}
    & \mathbb{E} \left[ \Phi (\bm{\theta}^{(t+1)};\mathcal{D}^{(t)}) \right] - \Phi (\bm{\theta}^{(t)};\mathcal{D}^{(t)})\notag \\
    \leq & - \frac{\tau \eta}{2} \left\| \nabla \Phi(\bm{\theta}^{(t)}) \right\|^2 
        - \frac{\tau \eta}{2} (1-2\tau \eta L) \mathbb{E} \left[ \left\| \sum_{k=1}^K p_k \mathbf{h}_k^{(t)} \right\|^2 \right] \notag\\
    & +L \tau \eta^2 \sigma^2 \sum_{k=1}^K p_k^2
        + \frac{\tau \eta}{2} \mathbb{E} \left[ \left\| \nabla \Phi(\bm{\theta}^{(t)}) - \sum_{k=1}^K p_k \mathbf{h}_k^{(t)} \right\|^2 \right].
\end{align}

When $1-2\tau \eta L \geq 0$, we have
\begin{align}
    & \quad \mathbb{E} \left[ \Phi (\bm{\theta}^{(t+1)};\mathcal{D}^{(t)}) \right] - \Phi (\bm{\theta}^{(t)};\mathcal{D}^{(t)})\notag \\
    & \leq - \frac{\tau \eta}{2} \left\| \nabla \Phi(\bm{\theta}^{(t)}) \right\|^2 
        +L \tau \eta^2 \sigma^2 \sum_{k=1}^K p_k^2
        + \frac{\tau \eta}{2} \mathbb{E} \left[ \left\| \nabla \Phi(\bm{\theta}^{(t)}) - \sum_{k=1}^K p_k \mathbf{h}_k^{(t)} \right\|^2 \right] \\
    & = - \frac{\tau \eta}{2} \left\| \nabla \Phi(\bm{\theta}^{(t)}) \right\|^2 
        +L \tau \eta^2 \sigma^2 \sum_{k=1}^K p_k^2
        + \frac{\tau \eta}{2} \mathbb{E} \left[ \left\| \sum_{k=1}^K p_k (\nabla \Phi_k(\bm{\theta}^{(t)}) - \mathbf{h}_k^{(t)}) \right\|^2 \right] \\
    & \leq - \frac{\tau \eta}{2} \left\| \nabla \Phi(\bm{\theta}^{(t)}) \right\|^2 
        +L \tau \eta^2 \sigma^2 \sum_{k=1}^K p_k^2
        + \frac{\tau \eta}{2} \sum_{k=1}^K p_k \underbrace{\mathbb{E} \left[ \left\| \nabla \Phi_k(\bm{\theta}^{(t)}) - \mathbf{h}_k^{(t)} \right\|^2 \right]}_{T_3} \label{T3},
\end{align}
where (\ref{T3}) uses Jensen's Inequality $\left\| \sum_{k=1}^K p_k x_k \right\|^2 \leq \sum_{k=1}^K p_k \left\| x_k \right\|^2$.

\vspace{7mm}
\textbf{Bounding $T_3$ in (\ref{T3})}:
\begin{align}
    \mathbb{E} \left[ \left\| \nabla \Phi_k(\bm{\theta}^{(t)}) - \mathbf{h}_k^{(t)} \right\|^2 \right]
        & = \mathbb{E} \left[ \left\| \nabla \Phi_k(\bm{\theta}^{(t)}) - \frac{1}{\tau} \sum_{r=0}^{\tau-1} 
            \nabla \Phi_k(\bm{\theta}_k^{(t,r)}) \right\|^2 \right] \\
        & = \mathbb{E} \left[ \left\| \frac{1}{\tau} \sum_{r=0}^{\tau-1} (\nabla \Phi_k(\bm{\theta}^{(t)}) - 
            \nabla \Phi_k(\bm{\theta}_k^{(t,r)})) \right\|^2 \right] \\
        & \leq \frac{1}{\tau} \sum_{r=0}^{\tau-1} \mathbb{E} \left[ \left\| 
            \nabla \Phi_k(\bm{\theta}^{(t)}) - \nabla \Phi_k(\bm{\theta}_k^{(t,r)}) \right\|^2 \right] \label{T3_3} \\
        & \leq \frac{L^2}{\tau} \sum_{r=0}^{\tau-1} \underbrace{\mathbb{E} \left[ \left\| 
            \bm{\theta}^{(t)} - \bm{\theta}_k^{(t,r)} \right\|^2 \right]}_{T_4} \label{T3_last},
\end{align}
where (\ref{T3_3}) uses Jensen's Inequality and (\ref{T3_last}) follows Lipschitz-smooth property.

\vspace{7mm}
\textbf{Bounding $T_4$ in (\ref{T3_last})}:
\begin{align}
    &\mathbb{E} \left[ \left\| \bm{\theta}^{(t)} - \bm{\theta}_k^{(t,r)} \right\|^2 \right]
     = \eta^2 \mathbb{E} \left[ \left\| \sum_{s=0}^{r-1} g_k(\bm{\theta}_k^{(t,s)}) \right\|^2 \right] \\
     \leq & 2\eta^2 \mathbb{E} \left[ \left\| \sum_{s=0}^{r-1} \left( g_k(\bm{\theta}_k^{(t,s)})-\nabla \Phi_k(\bm{\theta}_k^{(t,s)}) \right) \right\|^2 \right]
        + 2\eta^2 \mathbb{E} \left[ \left\| \sum_{s=0}^{r-1}  \nabla \Phi_k(\bm{\theta}_k^{(t,s)}) \right\|^2 \right] \label{T4_2}\\
    = & 2\eta^2 \sum_{s=0}^{r-1} \mathbb{E} \left[ \left\|  g_k(\bm{\theta}_k^{(t,s)})-\nabla \Phi_k(\bm{\theta}_k^{(t,s)}) \right\|^2 \right]
        + 2\eta^2 \mathbb{E} \left[ \left\| \sum_{s=0}^{r-1}  \nabla \Phi_k(\bm{\theta}_k^{(t,s)}) \right\|^2 \right] \label{T4_3}\\
    \leq & 2r\eta^2\sigma^2 + 2\eta^2 \mathbb{E} \left[ \left\| r \sum_{s=0}^{r-1} \frac{1}{r}  \nabla \Phi_k(\bm{\theta}_k^{(t,s)}) \right\|^2 \right] \label{T4_4}\\
    \leq & 2r\eta^2\sigma^2 + 2r\eta^2 \sum_{s=0}^{r-1} \mathbb{E} \left[ \left\| \nabla \Phi_k(\bm{\theta}_k^{(t,s)}) \right\|^2 \right] \label{T4_5}\\
    \leq & 2r\eta^2\sigma^2 + 2r\eta^2 \sum_{s=0}^{\tau-1} \mathbb{E} \left[ \left\| \nabla \Phi_k(\bm{\theta}_k^{(t,s)}) \right\|^2 \right] \label{T4_last}
\end{align}
where (\ref{T4_2}) uses $\left\| a+b \right\|^2 \leq  2 \left\| a \right\|^2 + 2 \left\| b \right\|^2$, (\ref{T4_3}) uses Lemma 2 in~\cite{fednova}, (\ref{T4_4}) uses the bounded variance assumption in Assumption~\ref{ass_grad}, (\ref{T4_5}) uses Jensen's Inequality.

Plug (\ref{T4_last}) back into (\ref{T3_last}) and use this equation $\sum_{r=0}^{\tau-1}r=\frac{\tau(\tau-1)}{2}$, we have
\begin{align}
    &\mathbb{E} \left[ \left\| \nabla \Phi_k(\bm{\theta}^{(t)}) - \mathbf{h}_k^{(t)} \right\|^2 \right] \notag \\
    \leq & \frac{L^2}{\tau} \sum_{r=0}^{\tau-1} \mathbb{E} \left[ \left\| \bm{\theta}^{(t)} - \bm{\theta}_k^{(t,r)} \right\|^2 \right] \\
    \leq & (\tau-1)L^2\eta^2\sigma^2+(\tau-1)L^2\eta^2\sum_{s=0}^{\tau-1} \underbrace{\mathbb{E} \left[ \left\| \nabla \Phi_k(\bm{\theta}_k^{(t,s)}) \right\|^2 \right]}_{T_5} \label{T5},
\end{align}
where $T_5$ in (\ref{T5}) can be further bounded.

\vspace{7mm}
\textbf{Bounding $T_5$ in (\ref{T5})}:
\begin{align}
    & \mathbb{E} \left[ \left\| \nabla \Phi_k(\bm{\theta}_k^{(t,s)}) \right\|^2 \right] \notag \\
    \leq & 2 \mathbb{E} \left[ \left\| \nabla \Phi_k(\bm{\theta}_k^{(t,s)})-\nabla \Phi_k(\bm{\theta}^{(t)}) \right\|^2 \right]
        + 2 \mathbb{E} \left[ \left\| \nabla \Phi_k(\bm{\theta}^{(t)}) \right\|^2 \right] \label{T5_2}\\
    \leq & 2 L^2 \mathbb{E} \left[ \left\| \bm{\theta}^{(t)}-\bm{\theta}_k^{(t,s)} \right\|^2 \right]
        + 2 \mathbb{E} \left[ \left\| \nabla \Phi_k(\bm{\theta}^{(t)}) \right\|^2 \right] \label{T5_last},
\end{align}
where (\ref{T5_2}) uses $\left\| a+b \right\|^2 \leq  2 \left\| a \right\|^2 + 2 \left\| b \right\|^2$, (\ref{T5_last}) uses Lipschitz-smooth property. Plug (\ref{T5_last}) back to (\ref{T5}), we have
\begin{align}
    & \frac{L^2}{\tau} \sum_{r=0}^{\tau-1} \mathbb{E} \left[ \left\| \bm{\theta}^{(t)} - \bm{\theta}_k^{(t,r)} \right\|^2 \right]\notag \\
    \leq & (\tau-1)L^2\eta^2\sigma^2+2(\tau-1)\eta^2L^4\sum_{s=0}^{\tau-1} \mathbb{E} \left[ \left\| \bm{\theta}_k^{(t,0)}-\bm{\theta}^{(t,s)} \right\|^2 \right]  + 2(\tau-1)\eta^2L^2 \sum_{s=0}^{\tau-1} \mathbb{E} \left[ \left\| \nabla \Phi_k(\bm{\theta}^{(t)}) \right\|^2 \right] 
\end{align}

After rearranging, we have

\begin{align}
    &\mathbb{E} \left[ \left\| \nabla \Phi_k(\bm{\theta}^{(t)}) - \mathbf{h}_k^{(t)} \right\|^2 \right] \notag \\
    \leq & \frac{L^2}{\tau} \sum_{r=0}^{\tau-1} \mathbb{E} \left[ \left\| \bm{\theta}^{(t)} - \bm{\theta}_k^{(t,r)} \right\|^2 \right] \\
    \leq & \frac{(\tau-1)\eta^2\sigma^2L^2}{1-2\tau(\tau-1)\eta^2L^2} + \frac{2\tau(\tau-1)\eta^2L^2}{1-2\tau(\tau-1)\eta^2L^2} \mathbb{E} \left[ \left\| \nabla \Phi_k(\bm{\theta}^{(t)}) \right\|^2 \right] \\
    = & \frac{(\tau-1)\eta^2\sigma^2L^2}{1-A} + \frac{A}{1-A} \mathbb{E} \left[ \left\| \nabla \Phi_k(\bm{\theta}^{(t)}) \right\|^2 \right] \label{T3_rearange},
\end{align}
where we define $A=2\tau(\tau-1)\eta^2L^2 < 1$. Then

\begin{align}
    &\frac{\tau\eta}{2} \sum_{k=1}^K p_k \mathbb{E} \left[ \left\| \nabla \Phi_k(\bm{\theta}^{(t)}) - \mathbf{h}_k^{(t)} \right\|^2 \right] \notag \\
    \leq & \frac{\tau\eta}{2} \sum_{k=1}^K \left\{ p_k \left[ \frac{(\tau-1)\eta^2\sigma^2L^2}{1-A} + \frac{A}{1-A} \mathbb{E} \left[ \left\| \nabla \Phi_k(\bm{\theta}^{(t)}) \right\|^2 \right] \right] \right\} \\
    \leq & \frac{\tau(\tau-1)\sigma^2L^2\eta^3}{2(1-A)} + \frac{A\tau\eta\beta^2}{2(1-A)} \mathbb{E} \left[ \left\| \nabla \Phi(\bm{\theta}^{(t)}) \right\|^2 \right] + \frac{A\tau\eta\kappa^2}{2(1-A)} \label{plug_to_T3},
\end{align}
where (\ref{plug_to_T3}) follows bounded dissimilarity assumption in Assumption~\ref{ass_diss}. Plug (\ref{plug_to_T3}) back to (\ref{T3}), we have
\begin{align}
    &\mathbb{E} \left[ \Phi (\bm{\theta}^{(t+1)};\mathcal{D}^{(t)}) \right] - \Phi (\bm{\theta}^{(t)};\mathcal{D}^{(t)})\notag \\
    \leq & - \frac{\tau \eta}{2} \left\| \nabla \Phi(\bm{\theta}^{(t)};\mathcal{D}^{(t)}) \right\|^2 
        +L \tau \eta^2 \sigma^2 \sum_{k=1}^K p_k^2 \notag\\
        & + \frac{\tau(\tau-1)\sigma^2L^2\eta^3}{2(1-A)} + \frac{A\tau\eta\beta^2}{2(1-A)} \mathbb{E} \left[ \left\| \nabla \Phi(\bm{\theta}^{(t)};\mathcal{D}^{(t)}) \right\|^2 \right] + \frac{A\tau\eta\kappa^2}{2(1-A)} \\
    = & - \frac{\tau \eta}{2} (1-\frac{A\beta^2}{1-A}) \left\| \nabla \Phi(\bm{\theta}^{(t)};\mathcal{D}^{(t)}) \right\|^2
        +L \tau \eta^2 \sigma^2 \sum_{k=1}^K p_k^2 + \frac{\tau(\tau-1)\sigma^2L^2\eta^3}{2(1-A)}
        + \frac{A\tau\eta\kappa^2}{2(1-A)}.
\end{align}

If $\frac{A\beta^2}{1-A} \leq \frac{1}{2}$, then $\frac{1}{1-A} \leq 1+ \frac{1}{2\beta^2}$ and we have
\begin{align}
    &\mathbb{E} \left[ \Phi (\bm{\theta}^{(t+1)};\mathcal{D}^{(t)}) \right] - \Phi (\bm{\theta}^{(t)};\mathcal{D}^{(t)})\notag \\
    \leq & - \frac{\tau \eta}{4} \left\| \nabla \Phi(\bm{\theta}^{(t)};\mathcal{D}^{(t)}) \right\|^2
        +L \tau \eta^2 \sigma^2 \sum_{k=1}^K p_k^2 + \frac{\tau(\tau-1)\sigma^2L^2\eta^3}{2} (1+\frac{1}{2\beta^2})
        + \frac{A\tau\eta\kappa^2}{2} (1+\frac{1}{2\beta^2}) \\
    \leq & - \frac{\tau \eta}{4} \left\| \nabla \Phi(\bm{\theta}^{(t)};\mathcal{D}^{(t)}) \right\|^2
        +L \tau \eta^2 \sigma^2 \sum_{k=1}^K p_k^2 + \frac{3}{4} \tau(\tau-1)\sigma^2L^2\eta^3
        + \frac{3}{4} A\tau\eta\kappa^2 \label{before_average},
\end{align}
where (\ref{before_average}) follows $\beta \geq 1$ in Assumption~\ref{ass_diss}. Then
\begin{align}
    &\mathbb{E} \left[ \Phi (\bm{\theta}^{(t+1)};\mathcal{D}^{(t+1)}) \right] - \Phi (\bm{\theta}^{(t)};\mathcal{D}^{(t)})\notag \\
    \leq & \mathbb{E} \left[ \Phi (\bm{\theta}^{(t+1)};\mathcal{D}^{(t)}) \right] - \Phi (\bm{\theta}^{(t)};\mathcal{D}^{(t)}) \label{use_lemma2}\\
    \leq & - \frac{\tau \eta}{4} \left\| \nabla \Phi(\bm{\theta}^{(t)};\mathcal{D}^{(t)}) \right\|^2
        +L \tau \eta^2 \sigma^2 \sum_{k=1}^K p_k^2 + \frac{3}{4} \tau(\tau-1)\sigma^2L^2\eta^3
        + \frac{3}{4} A\tau\eta\kappa^2,
\end{align}
where (\ref{use_lemma2}) follows our key lemma in Lemma~\ref{lemma}.

Finally, by taking the average expectation across all rounds, we finish the proof of our Theorem~\ref{theorem}:
\begin{align}
    & \mathop{\min}_{t} \mathbb{E} \left\| \nabla \Phi(\bm{\theta}^{(t)};\mathcal{D}^{(t)}) \right\|^2 \leq \frac{1}{T} \sum_{t=0}^{T-1} \mathbb{E} \left\| \nabla \Phi(\bm{\theta}^{(t)};\mathcal{D}^{(t)}) \right\|^2 \\
    \leq & \frac{4\left[ \Phi(\bm{\theta}^{(0,0)}) - \Phi_{inf} \right]}{\tau\eta T} + 4 \eta L \sigma^2 \sum_{k=1}^K p_k^2 + 3(\tau-1)\eta^2\sigma^2L^2 + 6\tau(\tau-1)\eta^2L^2\kappa^2.
\end{align}

\textbf{Constraints on local learning rate.}
Here, we summarize the constraints on local learning rate $\eta$:
\begin{align}
    1-2\tau \eta L \geq&  0, \\
    \frac{1}{1-2\tau(\tau-1)\eta^2L^2} \leq& 1+ \frac{1}{2\beta^2},
\end{align}
that is,
\begin{equation}
    \eta L \leq \mathrm{min} \left\{ \frac{1}{2 \tau}, \frac{1}{\sqrt{2 \tau (\tau-1)(2 \beta^2 +1)}} \right\}.
\end{equation}

\subsubsection{Corollary}
\label{app:proof_corollary}

By setting $\eta=\frac{1}{\sqrt{\tau T}}$, the above bound can be written as
\begin{align}
    \mathop{\min}_{t} \mathbb{E} \left\| \nabla \Phi(\bm{\theta}^{(t)};\mathcal{D}^{(t)}) \right\|^2 
        \leq & \frac{4[\Phi(\bm{\theta}^{(0,0)};\mathcal{D}^0)-\Phi_{inf}]+4 L^2 \sigma^2 \sum_{k=1}^K p_k^2}{\sqrt{\tau T}} \notag\\
    & + \frac{3 (\tau-1) \sigma^2L^2}{\tau T} + \frac{6\tau (\tau-1) L^2 \kappa^2}{\tau T}\\
    = & \mathcal{O}(\frac{1}{\sqrt{\tau T}}) + \mathcal{O}(\frac{1}{T}) + \mathcal{O}(\frac{\tau}{T}).
\end{align}

\end{document}